\newtheorem{theorem}{Theorem}[section]
\newtheorem{cor}[theorem]{Corollary}
\newtheorem{defi}[theorem]{Definition}
\newtheorem{lemma}[theorem]{Lemma}
\newtheorem{prop}[theorem]{Proposition}
\newtheorem{assumption}[theorem]{Assumption}
\newtheorem{remark}[theorem]{Remark}
\title{Universality of parametric Coupling Flows over parametric diffeomorphism}
\author[1]{Junlong Lyu  \thanks{Equal contribution}}
\author[1]{Zhitang Chen \thanks{Equal contribution}} 
\author[1]{Chang Feng}
\author[1]{Wenjing Cun}
\author[1]{Shengyu Zhu}
\author[1]{Yanhui Geng}
\author[2]{Zhijie Xu}
\author[2]{Yongwei Chen}
\affil[1]{Huawei Noah's Ark Lab}
\affil[2]{Huawei}
\begin{document}

\maketitle

\begin{abstract}
    Invertible neural networks based on Coupling Flows (\texttt{CFlows}) have various applications such as image synthesis and data compression. The approximation universality for \texttt{CFlows} is of paramount importance to ensure the model expressiveness. In this paper, we prove that \texttt{CFlows} can approximate any diffeomorphism in $C^k$-norm if its layers can approximate certain single-coordinate transforms. Specifically, we derive that a composition of affine coupling layers and invertible linear transforms achieves this universality. Furthermore, in parametric cases where the diffeomorphism depends on some extra parameters, we prove the corresponding approximation theorems for our proposed parametric coupling flows named  \texttt{Para-CFlows}. In practice, we apply \texttt{Para-CFlows} as a neural surrogate model in contextual Bayesian optimization tasks, to demonstrate its superiority over other neural surrogate models in terms of optimization performance.
\end{abstract}

\section{Introduction}
Invertible neural networks (INNs) such as coupling flows are firstly introduced as a class of generative models with a tractable likelihood \cite{Dinh2017DensityEU,Kingma2018GlowGF,Oord2018ParallelWF}, and have shown their usefulness and powerfulness in various machine learning tasks such as inverse problems \cite{Ardizzone2019AnalyzingIP}, probabilistic inference \cite{Louizos2017MultiplicativeNF} and feature extraction \cite{Izmailov2020SemiSupervisedLW} in recent years.  

With plenty of successful applications of INNs, one would wonder if such a type of models have the universal expressiveness. As most generative models mainly concern about the transform between distributions, existing works such as \cite{Huang2018NeuralAF,Jaini2019SumofSquaresPF}  focused on the expressiveness from the distribution perspective. However,  the expressiveness from the distribution perspective does not result in the expressiveness from the mapping perspective, as there are a large (or even infinite) number of
diffeomorphisms mapping the given source $\mu$ to the given target $\nu$ . In many applications, knowing the distributional universality is not yet enough, one may be interested in knowing if the optimal transport \cite{Villani2008OptimalTO}, which finds emerging applications in many fields, e.g., machine learning \cite{Peyr2019ComputationalOT}, wireless communication \cite{Mozaffari2017WirelessCU} and economics \cite{Galichon2016OptimalTM},   can be approximated by invertible neural networks.

% {\color{red} Given some distribution $\mu$, as there are a large (usually infinite) number of different self-diffeomorphisms that preserves $\mu$, that is, $\mu(A)$, one can see that different diffeomorphisms may pushforward the source distribution to the same target distribution. Thus the expressiveness from the distribution perspective does not directly result in the expressiveness from the mapping perspective. }

Therefore, beyond the distributional universality, it is also important to investigate the universality from the mapping perspective. As INNs are always differentiable and invertible, the mappings generated by INNs are always diffeomorphisms. Diffeomorphism plays an important role in mathematics, physics and engineering domains with applications in fluid dynamics \cite{Ebin1970GroupsOD}, wave propagation \cite{Griffiths1992NumericalWP}, robot controls \cite{Rimon1989TheCO} etc.
A natural question comes to the surface: can all diffeomorphisms be approximated by INNs?  Besides, whether the INNs approximate the derivatives in the meanwhile is also interesting and important, e.g., it provides theoretical guarantee for black box optimization tasks based on surrogate models when gradients are utilized.

More importantly, in many real-world problems, e.g.,  3D Euclidean groups, ODE systems and invertible PDE systems with time $t$ as its parameter, diffeomorphisms are usually described as a parametric type:  a parametric diffeomorphism is a function $f(\bm{y},\bm{x}): \mathbb{R}^{m+d} \to \mathbb{R}^d$ such that for any fixed parameter $\bm{y}_0 \in \mathbb{R}^m$, $f(\bm{y}_0, \cdot)$ is a diffeomorphism between $\mathbb{R}^d$ and $\mathbb{R}^d$.   It is interesting to know if these parametric diffeomorphisms can also be approximated by INNs with additional inputs of parameters. 
% To prove the universality theorem, we adopt the same structural theorem as in \cite{Teshima2020CouplingbasedIN}, with a simpler but more general proof: the proof in \cite{Teshima2020CouplingbasedIN} is not able to handle $C^{k}$ diffeomorphisms when $k = d+1$ with $d$ be the dimension of the space, but our proof is still valid in this situation.

Existing works including \cite{Teshima2020CouplingbasedIN} have been proposed to investigate the universality of INNs over diffeomorphisms, however it is not able to handle $C^{k}$-diffeomorphisms when $k = d+1$ with $d$ as the dimension of the space. In this paper, we address this limitation and more importantly, we generalize this structural theorem to the parametric diffeomorphism situation and provide a complete proof. This structural theorem shows that any diffeomorphisms (parametric or not) satisfying compactness conditions can be decomposed into finitely many compositions of single-coordinate transforms, which we can easily prove  to be $C^k$-approximated by, e.g., affine coupling flows together with a single zero-padding. 

{Our contribution is three-fold:}
\begin{itemize}

    \item We improve the approximation results in \cite{Teshima2020CouplingbasedIN} to higher-order derivatives approximation with a simpler proof.
    
    \item We generalize  coupling flows to parametric coupling flows and prove their $C^k$-universal approximation to parametric diffeomorphisms.

    \item We propose a practical neural network structure of the parametric affine coupling flow and demonstrate the advantage of using such a surrogate model for contextual Bayesian Optimization (BO) tasks.
\end{itemize}

% The rest of the paper is organized as follows. In Section~\ref{preliminary}, we introduce the diffeomorphisms, parametric diffeomorphisms, coupling flows (\texttt{CFlows}) and parametric coupling flows (\texttt{Para-CFlows}), and provide definitions and relations of different universalities. In Section 3, we state the $C^k$- universality theorem for both non-parametric and parametric cases, and complete proofs are given in Section 4. In Section 5 we present numerical results to demonstrate the approximation capability of the \texttt{Para-CFlows} with a synthetic dataset called ``Taiji" and show the advantage of using \texttt{Para-CFlows} as surrogate models for contextual bandits tasks. We conclude our paper in Section 6.
\section{Preliminary}\label{preliminary}
In this section, we introduce some prior knowledge on diffeomorphism, parametric diffeomorphism, INNs and universality, as well as existing works on universality of INNs.

In what follows, we always assume $k,m,n,d \in \mathbb{N}^+$, where $k$ represents the derivative order, and $m,n,d$ represent dimensionalities respectively for some Euclidean spaces. Unless otherwise stated, we suppose all vectors are row vectors.
\subsection{$C^k$-diffeomorphism groups on $\mathbb{R}^d$}

{\bf $C^k$-diffeomorphisms.} Consider an invertible map $f$ from $\mathbb{R}^d$ to $\mathbb{R}^d$. $f$ is said to be a $C^k$-diffeomorphism, if $f$ has up to $k$-th continuous derivatives, and $\det| Df(\bm{x})|  \neq 0$ for all $\bm{x}$ . Here $Df$ is the Jaccobian matrix of $f$.

{\bf  $C^k$-diffeomorphism group.} One can easily verify that, given $f, g: C^k$ diffeomorphisms from $\mathbb{R}^d$ to $\mathbb{R}^d$,
$f\circ g: f\circ g(\bm{x}) = f(g(\bm{x}))$ and $f^{-1}$ is still a $C^k$-diffeomorphism from $\mathbb{R}^d$ to $\mathbb{R}^d$.  From all above, if we denote 
$$\text{Diff}^k(\mathbb{R}^d) \triangleq\{ f: f\text{ is a }C^k\text{ diffeomorphism over }\mathbb{R}^d \},   $$
we see that  $\text{Diff}^k(\mathbb{R}^d)$ has a natural group structure with composition as its group operator. 

{\bf Compactly supported $C^k$-diffeomorphisms.} In real applications, finite data cannot cover the whole space $\mathbb{R}^d$. Besides, existing approximation theories only guarantee the capability over some bounded compact set $K$. Due to this two facts, it is more appropriate to consider compactly supported functions or diffeomorphisms. A function $f: \mathbb{R}^d \to \mathbb{R}$ is said to be compactly supported, if there exists a compact set $K$ such that $f(\bm{x}) = 0, \text{ for all } \bm{x} \notin K$. Similarly, a diffeomorphism $f$ is said to be compactly supported, if there exists a compact set $K$ such that $f(\bm{x}) = \bm{x}, \text{ for all } \bm{x} \notin K$, which can result in that all the components of $Df$ are compactly supported. We define
\begin{equation*}
    \text{Diff}^k_c(\mathbb{R}^d) \triangleq \{ f \in \text{Diff}^k(\mathbb{R}^d): f\text{ compactly supported} \},  
\end{equation*}
it is easy to see that $\text{Diff}^k_c(\mathbb{R}^d)$ is a subgroup of $\text{Diff}^k(\mathbb{R}^d)$.

Next, we introduce the parametric diffeomorphism.

{\bf $C^k$-parametric diffeomorphisms.} 
A parametric diffeomorphism is a family of diffeomorphisms with some parameter $\alpha$:  $\{ f_\alpha \}_{\alpha \in A}$, where for any given $\alpha$, $f_\alpha$ is a $\mathbb{R}^d \to \mathbb{R}^d$ diffeomorphism. Usually the parameter is described by some vector $\bm{y} \in \mathbb{R}^m$, and $f_{\bm{y}}$ varies continuously or smoothly w.r.t.~$\bm{y}$. We denote $f(\bm{y},\bm{x}) = f_{\bm{y}} (\bm{x})$. For such a  parametric diffeomorphism, we can embed it into a higher dimensional diffeomorphism  $F(\bm{y},\bm{x}) = \big( \bm{y}, f(\bm{y},\bm{x})\big)$. One can verify directly that $F \in \text{Diff}^k(\mathbb{R}^{m+d})$ given that (1) $f({\bm{y}_0, \bm{x}}) \in \text{Diff}^k(\mathbb{R}^d)$ for any fixed ${\bm{y}_0}$; (2) $f$ is $k$-th differentiable w.r.t. $(\bm{y},\bm{x})$.
% \begin{itemize}
% \item[1)] $f({\bm{y}_0, \bm{x}}) \in \text{Diff}^k(\mathbb{R}^d)$ for any fixed ${\bm{y}_0},$
% \item[2)] $f$ is $k$-th differentiable w.r.t. $(\bm{y},\bm{x})$.
% \end{itemize}
It is obvious that
\begin{align}\text{Diff}_c^{k,m,d} \triangleq &\{ F \in \text{Diff}_c^{k}(\mathbb{R}^{m+d}) : F(\bm{y},\bm{x}) = (\bm{y}, f(\bm{y},\bm{x})) ~\text{ with } \bm{y}\in \mathbb{R}^{m}, \bm{x}\in\mathbb{R}^d\},  \label{paradiff}  \end{align}
whose elements keep the first $m$ coordinates and change the last $d$ ones, is a subgroup of $\text{Diff}_c^{k}(\mathbb{R}^{m+d})$. 

%%Examples

\subsection{INNs based on parametric coupling flows}
%, in which we will prove the denseness in whole diffeomorphism space in the next section. Recall the definition of parametric diffeomorphisms, it is straightforward to think about including parameters $\bm{y}$ into the coupling layer, which we called Parametric Coupling Flows (\texttt{Para-CFlows}).

Here we introduce the classical INNs and investigate the (parametric) diffeomorphism space generated by them. 

{\bf Invertible linear transforms.} First, let us define an elementary group of diffeomorphisms - the invertible linear transforms (ILT):
$$\text{ILT}_{d} \triangleq \{\mathcal{L}:  \mathcal{L} \bm{x}^T = A\bm{x}^T + {\bm{b}}^T, A \in \text{GL}_d(\mathbb{R}), {\bm{b}} \in \mathbb{R}^d \}$$
and the parametric case where $\bm{y}$ is included as parameters:
$$
\text{ILT}_{m,d}\triangleq  \{\mathcal{L} : \mathcal{L}\left(\begin{matrix} {\bm{y}^T}\\{\bm{x}^T} \end{matrix}\right) = \left(\begin{matrix} I_m &0\\B&A \end{matrix}\right)\left(\begin{matrix} {\bm{y}^T}\\ 
{\bm{x}^T} \end{matrix}\right)+ \left(\begin{matrix} {\bm{0}}\\{\bm{b}^T} \end{matrix}\right), 
A \in \text{GL}_{d}(\mathbb{R}), {\bm{b}}\in \mathbb{R}^d\},
$$
where $\text{GL}_d(\mathbb{R})$ denotes the set of all regular matrices on $\mathbb{R}^d$.
These groups could be powerful when combined with nonlinear transforms.

{\bf Coupling flows.} We further define invertible coupling flows which are some specific nonlinear transforms:
\begin{align*}\Phi_{d,i,\phi}: 
\mathbb{R}^{d}&\longrightarrow \mathbb{R}^{d}\\
(\bm{x}_{\le i} ,\bm{x}_{> i})
&\longmapsto
\left(\bm{x}_{\le i} ,\phi(\bm{x}_{\le i},\bm{x}_{>i}) \right),
\end{align*}
where $\phi(\bm{x}_{\le i},\cdot) : \mathbb{R}^{d-i} \to \mathbb{R}^{d-i}$ is a diffeomorphism for each fixed $\bm{x}_{\le i}$. Specifically, when $\phi(\bm{x}_{\le i},\bm{x}_{>i}) = \bm{x}_{>i} \odot \exp\left(\sigma(\bm{x}_{\le i})\right) + t(\bm{x}_{\le i})$, it is the so-called affine coupling flow and we denote  $\Phi_{d,i,\sigma,t} =\Phi_{d,i,\phi}$ for such $\phi$. $\sigma, t$  are some functions with $d-i$ output units,  typically modeled with deep neural networks. $\odot$ represents the point-wise product. $\bm{x}_{\le i} = (x_1, \cdots, x_i),$ $\bm{x}_{> i} = (x_{i+1}, \cdots, x_d)$ for $\bm{x} = (x_1,\cdots,x_d)$.
Similarly, for parametric cases, we have:
\begin{align}\Phi_{d,i,m,\phi}: 
\mathbb{R}^{m+d}&\longrightarrow \mathbb{R}^{m+d}\nonumber\\
(\bm{y} ,\bm{x}_{\le i} ,\bm{x}_{> i})
&\longmapsto
\left(\bm{y} ,\bm{x}_{\le i} ,\phi(\bm{y},\bm{x}_{\le i},\bm{x}_{>i})\right), \label{couplinglayer}
\end{align}
and specifically,
 $\phi(\bm{y},\bm{x}_{\le i},\bm{x}_{>i}) = \bm{x}_{>i} \odot \exp\left(\sigma(\bm{y},\bm{x}_{\le i})\right) + t(\bm{y}, \bm{x}_{\le i})$ for affine-type coupling flows and we denote  $\Phi_{d,i,m,\sigma,t} =\Phi_{d,i,m,\phi}$ for such $\phi$.

{\bf Single-coordinate affine coupling flows.} Specifically, we can define a flow with only the last coordinate changed. We call it a single-coordinate affine coupling flow (SACF): Let $\mathcal{H}_d$ be a set of functions from $\mathbb{R}^d$ to $\mathbb{R}$. We define 

$\mathcal{H}$-SACF$_{d}\triangleq \{ \Phi_{d,d-1,\sigma,t}:\sigma, t \in \mathcal{H}_{d-1}\}$, and

$\mathcal{H}$-SACF$_{m,d}\triangleq\{ \Phi_{d,d-1,m,\sigma,t}:\sigma, t \in \mathcal{H}_{m+d-1}\}$.

Note that any multi-coordinates affine coupling flows can be represented by finite composition of single-coordinate affine coupling flows, and thus it suffices to just consider the universality of single-coordinate affine coupling flows.

{\bf Invertible neural networks.} Now let us combine linear invertible transform layers and some coupling flow layers to construct our INNs. Let $\mathcal{G}$ be a set consisting of invertible coupling flows. We define the set of INNs based on $\mathcal{G}$ as 
$$
\mathcal{G}\text{-INN}_{d} \triangleq\{  g_s \circ W_s  \circ \cdots \circ g_1 \circ W_1 : s\in \mathbb{N}, g_i\in \mathcal{G},\\  W_i \in \text{ILT}_{d}\},$$
and similarly the parametric case: 
$$
\mathcal{G}\text{-INN}_{m,d} \triangleq \{  g_s \circ W_s \circ \cdots  \circ g_1 \circ W_1: s\in \mathbb{N}, g_i\in \mathcal{G},\\  W_i \in \text{ILT}_{m,d}\}.
$$
When $\mathcal{G}$ contains $\mathcal{H}\text{-SACF}_{d}$ (or $\mathcal{H}\text{-SACF}_{m,d}$ in parametric cases), it is equivalent to replace $\text{ILT}_{d}$ (or $\text{ILT}_{m,d}$) by the symmetric group $S_d$ containing all the permutation over $d$ coordinates operating on $\bm{x}$. This type of networks are well known as Real-NVPs \cite{Dinh2016Real}.

One can also define other types of coupling flows. Nevertheless, the theoretical guarantee of any coupling flow can be verified by simply checking their universality to single-coordinate transforms, which is the main result of our paper as stated in Thm.~\ref{singleapprox}.

\subsection{Different types of universality and their relations}

In this section, we give a definition of universality. For the definition of different  functional norms, please refer to Appendix~\ref{normdef}. 

%\sz{we may move norm definitions to appendix, depending on the space}

\begin{defi}($L^p/ L^\infty/C^k$-universality).
Let $\mathcal{M}$ be a set of measurable mappings from $\mathbb{R}^n$ to $\mathbb{R}^d$. Let $p \in [1,\infty)$, $k \in \mathbb{N}^+$ and let $\mathcal{F}$ be a set of measurable mappings $f: U_f \to \mathbb{R}^d$, where $U_f$ is a measurable subset of $\mathbb{R}^n$ which may depend on $f$. We say that $\mathcal{M}$ has $L^p$  $(\text{or }L^\infty, C^k)$-universality  for $\mathcal{F}$, if for any $f \in \mathcal{F}$ any $\epsilon > 0$, and any compact subset $K \subseteq U_f$, there exists a $g \in \mathcal{M}$ such that, $\Vert  f - g \Vert _{L^p(K)} < \epsilon$  $(\text{or }\Vert  f - g \Vert _{L^\infty(K)} , \Vert  f - g \Vert _{C^k(K)} < \epsilon)$.
\end{defi}

Here we also define distributional universality. 

\begin{defi}(Distributional universality). 
Let $\mathcal{M}$ be a set  a set of measurable mappings from $\mathbb{R}^n$ to $\mathbb{R}^d$. We say that $\mathcal{M}$ is a distributional universal approximator if for any absolutely continuous probability measure $\mu$ over $ \mathbb{R}^n$ w.r.t. Lebesgue measure, and any probability measure $\nu$ over $\mathbb{R}^d$, there exists a sequence $\{g_i\}_{i=1}^\infty \subseteq \mathcal{M}$ such that $(g_i)_\ast\mu$ converges to $\nu$ in distribution as $i \to \infty$, where $(g_i)_\ast\mu (A) \triangleq \mu\left( g_i^{-1}(A)\right)$ for any measurable $A$.
\end{defi}

% Distributional universality is important as a theoretical guarantee in the literature of normalizing flows, i.e., probability distribution models constructed using INNs.
If $\mathcal{M}$ has the $C^k$-universality, $\mathcal{M}$ has the $L^\infty$-universality because $C^k(K)$ is dense in $L^\infty (K)$ for any compact $K$. Also, if $\mathcal{M}$ has the $L^\infty$-universality, $\mathcal{M}$ has the $L^p$-universality for any $1 \le p < \infty$. If an $L^p$-universality is satisfied for some $1 < p < \infty$, then for any $1 \le q < p$, $L^q$-universality is ensured. Finally, $L^1$-universality implies distributional universality but the distributional universality does not imply $L^1$-universality. 
% For example, consider a simple mapping $f(x) = -x$, which preserves a central symmetry measure $\mu$, where the central symmetry means $\mu(A) = \mu(-A)$ for any measurable $A$. Let $g_i$ be the sequence satisfying $(g_i)_\ast \mu$ converges to  $ \nu$ for some $\nu$, then $(f\circ g_i)_\ast \mu = (g_i)_\ast \mu$ also converges to $ \nu$, but the distributional universality theorem does not ensure both $g_i$ of them can be approximated. 

\subsection{Related works for universality of flow models}
 According to the model type, we divide the literature into three categories as follows.
% It's naturally divide them into discrete time flows and continuous time flows. For discrete time flows we can also divide into two class: the coupling (triangular) flows and non-coupling (non-triangular) flows.

{\bf Coupling (or triangular) flows.}
\cite{Hyvrinen1999NonlinearIC,Bogachev2004TriangularTO} proved that the distributional universality of a flow family $h$ can be deduced, if $h$ is dense in the set of all monotone functions by pointwise convergence topology. In particular, \cite{Huang2018NeuralAF} proved the distributional universality for neural autoregressive flows, and  \cite{Jaini2019SumofSquaresPF} proved the distributional universality for sum-of-square flows. \cite{Teshima2020CouplingbasedIN} generalized these results, proving that affine coupling flows have $L^p$ universality for $1 \le p < \infty$, and autoregsive flows and sum-of-square flows have $L^\infty$ universality. However, the approximability to derivatives remains untouched.

{\bf Non-triangular flows.}
%Although non-triangular flows embrace the possibility to be more expressive owing to its more complex structure than coupling flows, t
The expressiveness of general non-coupling flows is not well studied, as most existing works restrict the form of nonlinearity to certain types, e.g., planar and radial flows \cite{JimenezRezende2015VariationalIW}, Sylvester flows \cite{Berg2018SylvesterNF}, in order to easily compute the determinant of the Jacobian matirx and the inverse maps. \cite{Kong2020TheEP} gave a first study
%on the expressiveness of the flow types as above
over some specific distributions, while the universal distributional results still remains unknown. Another type of non-coupling flows, iResNet \cite{Behrmann2019InvertibleRN}, was proposed based on residual network (ResNet) \cite{He2016DeepRL} to improve nonlinearity as well as the computation efficiency of the approximated log determinant.  \cite{Zhang2019ApproximationCO} proved that iResNet, capped by a linear layer or with extra dimensions, has $C^0$-universality.

{\bf Continuous time flows.}
The ODE-based method is also a major class of flow models as introduced in \cite{Chen2018ContinuousTimeFF,Grathwohl2019FFJORDFC,Chen2018NeuralOD,Salman2018DeepDN}.  \cite{Chen2018NeuralOD} gave counterexamples for the $C^0$-universality of neural ODEs, however its distributional universality is not yet addressed.  An ``augmented" neural ODEs was proposed by \cite{Dupont2019AugmentedNO} and then analyzed by \cite{Zhang2019ApproximationCO}. They showed that embedding the original $d$ dimensional neural ODE into a $d+1$ dimensional space can bypass the original counterexamples, achieving a $C^0$-universality.

\section{Main results}
In this section, we present our main results. Section~\ref{nonparametric} provides a proof for non-parametric cases, showing that: 1) the $C^k$-universality over the $\text{Diff}_c^k(\mathbb{R}^d)$ can be achieved if the $C^k$-universality over a simple class of diffeomorphisms (only a single coordinate altered) is ensured; 2) affine coupling layers with one zero-padding has $C^k$-universality over such a simple class of diffeomorphisms, thus over the whole $\text{Diff}_c^k(\mathbb{R}^d)$. In Section~\ref{parametric}, we generalize the result to the parametric case using similar proof framework. 

\subsection{Non-parametric cases} \label{nonparametric}
% According to "COUPLING-BASED INNs ARE
% UNIVERSAL DIFFEOMORPHISM APPROXIMATORS", 
Here we outline the main steps of our proof, and the complete proof is available in Appendix~\ref{proof}.

Recall the space $\text{Diff}_c^k(\mathbb{R}^d)$: All the $C^k, \mathbb{R}^d \to \mathbb{R}^d$ diffeomorphisms which are compactly supported. There are two advantages of choosing $\text{Diff}_c^k(\mathbb{R}^d)$ instead of $\text{Diff}^k(\mathbb{R}^d)$ to prove the universality: firstly the compactly supported property greatly simplifies the structure of this group; secondly for any diffeomorphism $F \in \text{Diff}^k(\mathbb{R}^d)$ and any compact set $K$, there exists a compactly supported diffeomorphism $f \in \text{Diff}_c^k(\mathbb{R}^d)$ such that $F|_K =f |_K$, i.e., $F(\bm{x}) = f(\bm{x}), \text{ for all } \bm{x} \in K$. Thus, to prove the universality for $\text{Diff}^k(\mathbb{R}^d)$, we only need to consider $\text{Diff}_c^k(\mathbb{R}^d)$.

Directly proving the universality to a general diffeomorphism is difficult. It is beneficial to decompose the original hard problem into a series of simpler ones as follows.
\begin{defi}(Single-coordinate transforms.) A diffeomorphism $\tau \in \text{Diff}_c^k(\mathbb{R}^d)$ is called single-coordinate transform, if $\tau(\bm{x}) = \left(x_1,x_2,\cdots,x_{i-1},\tau_i(\bm{x}),x_{i+1},\cdots,x_d\right)$,  i.e., only one coordinate is altered. Here $\tau_i(\bm{x}) = \tau_i(x_1,\cdots,x_d)$ is a function from $\mathbb{R}^d$ to $\mathbb{R}$ which is monotonic for $x_i$ . We denote the set of all these single-coordinate transforms as $S_c^{k,d}$.
\end{defi}
One may wonder how many diffeomorphisms can be constructed using a composition of these simple single-coordinate transforms. The answer is surprising: the minimum subgroup containing $S_c^{k,d}$ in $\text{Diff}_c^k(\mathbb{R}^d)$ is $\text{Diff}_c^k(\mathbb{R}^d)$ itself. That is to say, all the diffeomorphisms in $\text{Diff}_c^k(\mathbb{R}^d)$ can be constructed from $S_c^{k,d}$.
\begin{theorem} \label{decomposition1}
$S_c^{k,d}$ is a generator of $\text{Diff}_c^k(\mathbb{R}^d)$: for any subgroup
$H \subseteq \text{Diff}_c^k(\mathbb{R}^d)$ s.t. $S_c^{k,d}\subseteq H$, $H = \text{Diff}_c^k(\mathbb{R}^d)$. That is, for any $f \in \text{Diff}_c^k(\mathbb{R}^d)$, there exists $\tau_1, \tau_2, \cdots, \tau_s \in S_c^{k,d}, s\in \mathbb{N}$ s.t. $f = \tau_s \circ \tau_{s-1}\circ \cdots \circ \tau_1$.
\end{theorem}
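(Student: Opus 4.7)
The plan is to reduce the statement to a purely local claim near the identity, and then to prove the local claim by an explicit coordinate-by-coordinate decomposition. The key reduction I would use is that $\text{Diff}_c^k(\mathbb{R}^d)$ is path-connected: for any target $f \in \text{Diff}_c^k(\mathbb{R}^d)$ I can construct a smooth isotopy $\{f_t\}_{t \in [0,1]}$ in $\text{Diff}_c^k(\mathbb{R}^d)$ with $f_0 = \text{id}$ and $f_1 = f$ whose supports all lie in a common compact set $K$ (e.g.\ via a smoothed Alexander trick or by flowing along a compactly supported time-dependent vector field that interpolates between $\text{id}$ and $f$). Partitioning $[0,1]$ finely enough yields $f = g_N \circ \cdots \circ g_1$, where each increment $g_i := f_{t_i} \circ f_{t_{i-1}}^{-1}$ is as $C^1$-close to the identity as I like and still supported in a compact set. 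So it suffices to show that any $g \in \text{Diff}_c^k(\mathbb{R}^d)$ that is sufficiently $C^1$-close to $\text{id}$ can be written as a finite composition of elements of $S_c^{k,d}$.

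For such a $g$, write $g(\bm{x}) = (g_1(\bm{x}), \ldots, g_d(\bm{x}))$ and define the candidate single-coordinate transform
$\tau^{(d)}(\bm{x}) := (x_1, \ldots, x_{d-1}, g_d(\bm{x}))$. Since $\partial g_d/\partial x_d$ is close to $1$ under the smallness hypothesis, $\tau^{(d)}$ is strictly monotonic in $x_d$, and it equals $\bm{x}$ outside the support of $g$, so $\tau^{(d)} \in S_c^{k,d}$. A short calculation using the definition of the inverse then shows that $h := (\tau^{(d)})^{-1} \circ g$ has last coordinate exactly $x_d$, i.e.\ it acts nontrivially only on the first $d-1$ coordinates while still being $C^1$-close to the identity (provided the threshold on $\|g-\text{id}\|_{C^1}$ was chosen small enough). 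I would then iterate this peeling argument: extract a single-coordinate transform in coordinate $d-1$ from $h$, then coordinate $d-2$, and so on. After $d$ iterations the residual is the identity, producing the decomposition $g = \tau^{(d)} \circ \tau^{(d-1)} \circ \cdots \circ \tau^{(1)}$ with every factor in $S_c^{k,d}$.

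The hard part is twofold. First, one must produce the isotopy $\{f_t\}$ with uniformly compact support and continuous in the $C^1$-topology; for finite $k$ over $\mathbb{R}^d$ this is classical but still requires an explicit construction because the naive Alexander contraction $f_t(\bm{x}) = t f(\bm{x}/t)$ is not smooth at $t=0$. Second, one must quantitatively track how the $C^1$ distance of the residual $(\tau^{(j)})^{-1}\circ h^{(j)}$ from the identity compares with that of $h^{(j)}$: each peeling step can inflate that distance by a constant factor depending on $d$, so the threshold used in the local step must be small enough that the invertibility condition $\partial h^{(j)}_j/\partial x_j > 0$ persists for all $j = d, d-1, \ldots, 1$. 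These estimates then feed back into choosing the partition $\{t_i\}$ fine enough in the isotopy step. Once both ingredients are in place, the theorem is immediate: every $f \in \text{Diff}_c^k(\mathbb{R}^d)$ is a finite product of single-coordinate transforms, so any subgroup $H$ containing $S_c^{k,d}$ must coincide with $\text{Diff}_c^k(\mathbb{R}^d)$.
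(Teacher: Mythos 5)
Your overall strategy is the same as the paper's: factor $f$ along an isotopy to the identity into finitely many near-identity pieces (the paper quotes Banyaga's connectedness result and its Lemma~2.1.8 for this), then peel one coordinate at a time off each near-identity piece, tracking how the $C^1$-distance to the identity inflates (the paper gets the factor $\delta\mapsto\delta/(1-\delta)$ and the threshold $\delta<\tfrac{1}{d-1}$). However, your peeling step contains a genuine error: with $\tau^{(d)}(\bm{x})=(x_1,\dots,x_{d-1},g_d(\bm{x}))$, the residual $h:=(\tau^{(d)})^{-1}\circ g$ does \emph{not} have last coordinate $x_d$. Indeed, writing $g(\bm{x})=(g_1(\bm{x}),\dots,g_d(\bm{x}))$, the last coordinate of $(\tau^{(d)})^{-1}\circ g(\bm{x})$ is the unique $w$ solving $g_d\bigl(g_1(\bm{x}),\dots,g_{d-1}(\bm{x}),w\bigr)=g_d(\bm{x})$, and this equals $x_d$ only if $g_d$ is insensitive to the change in the first $d-1$ coordinates. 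A concrete failure: for $g(x_1,x_2)=(x_1+\epsilon a(x_1,x_2),\,x_2+\epsilon b(x_1,x_2))$ with $b$ depending on $x_1$ and $a\not\equiv 0$, the residual's last coordinate differs from $x_2$ at order $\epsilon^2$. So your iteration does not terminate at the identity after $d$ steps as claimed.

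The fix is to compose on the other side, which is exactly what the paper does in its Theorem~\ref{firstdecomposition}: with the same $\tau^{(d)}$ (the paper's $h(\bm{x},y)=(\bm{x},f_{\bm{x}}(y))$), set the residual to $g\circ(\tau^{(d)})^{-1}$, i.e.\ factor $g = \bigl(g\circ(\tau^{(d)})^{-1}\bigr)\circ\tau^{(d)}$. Then the last coordinate of the residual at $\bm{z}$ is $g_d\bigl(z_1,\dots,z_{d-1},w\bigr)=z_d$ by the defining equation of $(\tau^{(d)})^{-1}$, so the residual genuinely fixes coordinate $d$ (and, as the paper observes, it continues to fix any coordinates already preserved, so the induction closes after $d$ steps). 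Your remaining ingredients --- compact support and monotonicity of $\tau^{(d)}$ from $C^1$-closeness, the quantitative inflation of the near-identity constant feeding back into the choice of partition, and the construction of a compactly supported isotopy (where the paper simply cites Banyaga rather than repairing the Alexander trick by hand) --- are consistent with the paper's proof once the composition order is corrected.
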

\begin{proof}
The following two steps are sketched:

1) For any subgroup $H \subseteq \text{Diff}_c^k(\mathbb{R}^d)$ such that $S_c^{k,d}\subseteq H$, $H$ contains all the near-identity diffeomorphisms (Def.~\ref{near-id}). The detail is stated in Cor.~\ref{totaldecomposition}.

2)  For any subgroup $H \subseteq \text{Diff}_c^k(\mathbb{R}^d)$ such that $H$ contains all the near-identity diffeomorphisms, $H =\text{Diff}_c^k(\mathbb{R}^d)$ (Lem.~\ref{neariddecomposition}) .
\end{proof}
\begin{remark}
An upper bound on $s$ is given in the scope of our proof. Given $f\in \text{Diff}_c^k(\mathbb{R}^d)$, we denote $\text{dist}_{\text{Diff}_c^1(\mathbb{R}^d)}(f, I) = \ell$ for some $\ell> 0$, the minimal length of paths between $f$ and the identity map $I$ lying in $\text{Diff}_c^1(\mathbb{R}^d)$ with the $C^1$-norm induced metric. By Cor.~\ref{totaldecomposition}, $f$ can be decomposed into $s_1$ many $(\frac{1}{d-1},1)$-near-identity diffeomorphisms with $s_1\approx (d-1)\ell$; by Lem.~\ref{neariddecomposition}, each  $(\frac{1}{d-1},1)$-near-identity diffeomorphism can be decomposed into at most $d$ single-coordinate transforms, thus $s \le s_1 \cdot d \approx d(d-1)\ell$. The evaluation for $\ell$ is difficult, and thus we only give a lower bound here: $\ell \ge \Vert f - I\Vert_{C^1}$. However, if $f$ is not far from the identity, the lower bound is a good estimator. Please refer to \cite{Banyaga1997The} for more details.
\end{remark}
Thm.~\ref{decomposition1} decomposes the original general complex diffeomorphism into finitely-many simple diffeomorphisms. It is natural to ask: if we can approximate all the $S^{k,d}_c$ well, can we also approximate $\text{Diff}_c^k(\mathbb{R}^d)$ well? Thm.~\ref{compos-approx} gives a positive answer to the question.
\begin{theorem} \label{compos-approx}
(Approximation for composition.)
Suppose $G$ is a group of diffeomorphisms over $\mathbb{R}^d$. Given a set of diffeomorphisms $F$ over $\mathbb{R}^d$, denote $\mathcal{F}$ as the semigroup generated by $F$. If $G$ has $C^k$-universality for $F$,   $G$ has $C^k$-universality for $\mathcal{F}$. If $F$ is inverse-invariant (i.e., for any $f \in F, f^{-1} \in F$), then $\mathcal{F}$ is a group.
\end{theorem}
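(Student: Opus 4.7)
The plan is to induct on the length of composition. An arbitrary $f\in\mathcal{F}$ has the form $f=f_s\circ f_{s-1}\circ\cdots\circ f_1$ with each $f_i\in F$. Given a target compact $K\subseteq\mathbb{R}^d$ and tolerance $\epsilon>0$, I would first build a nested sequence of compacts that absorbs the images of the intermediate approximants. Set $K_0=K$ and let $K_j$ be a compact neighborhood (with fixed margin $\eta>0$) of $(f_j\circ\cdots\circ f_1)(K_0)$ for $j=1,\dots,s-1$. Because each $f_i$ is a diffeomorphism, all $K_j$ are compact, and any map $h$ sufficiently $C^0$-close to $f_j\circ\cdots\circ f_1$ on $K_0$ will satisfy $h(K_0)\subseteq K_j$. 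Then, applying the hypothesis that $G$ has $C^k$-universality for $F$ on each $K_{j-1}$, I pick $g_j\in G$ with $\|g_j-f_j\|_{C^k(K_{j-1})}<\delta_j$ for parameters $\delta_j$ to be fixed below. Because $G$ is a group, the candidate approximant $g:=g_s\circ\cdots\circ g_1$ lies in $G$.

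The error is controlled via a standard composition-continuity estimate in $C^k$: if $u,u'$ are $C^k$ on a neighborhood of $v(K)\cup v'(K)$, then by the chain rule (Fa\`a di Bruno),
\begin{equation*}
\|u\circ v-u'\circ v'\|_{C^k(K)}\le C\bigl(1+\|v\|_{C^k(K)}+\|v'\|_{C^k(K)}\bigr)^{k}\bigl(\|u-u'\|_{C^k(v'(K))}+\|u\|_{C^{k+1}}\,\|v-v'\|_{C^k(K)}\bigr),
\end{equation*}
for some absolute constant $C=C(k,d)$. Iterating this bound across the $s$ layers, and writing $h_j:=f_j\circ\cdots\circ f_1$, $\tilde h_j:=g_j\circ\cdots\circ g_1$, I would show by induction that $\|\tilde h_j-h_j\|_{C^k(K)}$ is bounded by a polynomial in $\delta_1,\dots,\delta_j$ whose coefficients depend only on the $C^{k+1}$-norms of $f_1,\dots,f_s$ on the fixed compacts $K_0,\dots,K_{s-1}$. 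Choosing the $\delta_j$ in reverse order (smaller at the bottom layers, since their errors are amplified by outer derivatives) makes the total error below $\epsilon$. I would also pick each $\delta_j$ small enough that $\tilde h_{j}(K)\subseteq K_j$, so the outer approximant $g_{j+1}$ is genuinely evaluated where its $C^k$-error bound applies; this is guaranteed by the margin $\eta$ built into $K_j$ together with continuity of composition.

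For the final sentence, the semigroup $\mathcal{F}$ is closed under inverses whenever $F$ is: any $f=f_s\circ\cdots\circ f_1\in\mathcal{F}$ has inverse $f^{-1}=f_1^{-1}\circ\cdots\circ f_s^{-1}$ with each $f_i^{-1}\in F$ by inverse-invariance, hence $f^{-1}\in\mathcal{F}$. Combined with closure under composition and the identity (as the empty composition, or else $f\circ f^{-1}$), this makes $\mathcal{F}$ a group.

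The main obstacle is the composition-continuity bound and, more subtly, the propagation of errors: the chain rule produces products of derivatives of the outer map with derivatives of the inner map, so an error $\delta_j$ at layer $j$ gets multiplied by a factor involving $\|f_{j+1}\|_{C^{k+1}},\dots,\|f_s\|_{C^{k+1}}$ on the respective compacts. Handling this requires that the compacts $K_j$ be fixed \emph{before} the $\delta_j$'s are chosen, so that the amplification factors are fixed constants rather than a moving target. The remaining bookkeeping — verifying that approximants' images land where they must and that $C^k$-closeness is preserved under taking the group element $g_s\circ\cdots\circ g_1$ — is then routine.
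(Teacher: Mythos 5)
Your proposal follows essentially the same route as the paper: the paper proves the two-factor statement (approximate $f_1$ and $f_2$ separately, enlarge the compact set to $K'=\{\Vert \bm{x}\Vert\le r+1\}$ so that both $f_1(K)$ and $g_1(K)$ land where the approximation of $f_2$ is valid, then run a chain-rule error estimate) and extends to all of $\mathcal{F}$ by iterating over the semigroup, which is exactly your induction on the composition length with nested compacts $K_j$; your closure-under-inverses argument for the last sentence is also the intended one. The one point to repair is your composition-continuity bound: the term $\Vert u\Vert_{C^{k+1}}\,\Vert v-v'\Vert_{C^k(K)}$ presumes the outer map is $C^{k+1}$, but the maps in play (e.g.\ elements of $\text{Diff}_c^k$ or of $S_c^{k,d}$) are only $C^k$, so this Lipschitz-type control of $D^{\alpha}u$ is not available in general. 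The paper avoids this by using uniform continuity of the derivatives $D^{\alpha}f_2$ ($|\alpha|\le k$) on the fixed compact $K'$: one chooses $\delta>0$ from the modulus of continuity so that $\Vert D^{\alpha}f_2(\bm{x})-D^{\alpha}f_2(\bm{y})\Vert<\tilde\epsilon$ whenever $|\bm{x}-\bm{y}|<\delta$, and then requires $\Vert f_1-g_1\Vert_{C^k(K)}<\min\{1,\delta\}$. Substituting this modulus-of-continuity step for your $C^{k+1}$ factor (the compacts being fixed before the tolerances, as you already arrange) closes the gap, and the rest of your bookkeeping goes through unchanged.
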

\begin{proof}
We only need to prove the following statement: If $G$ has $C^k$-universality for $f_1, f_2 \in\mathcal{F}$, then $G$ has $C^k$-universality for $f_1\circ f_2 \in\mathcal{F} $.
By the property of semigroups, we know that $G$ has $C^k$-universality for the whole $\mathcal{F}$. Details can be found in~\ref{proof-compos}.
\end{proof}
Combining Thm.~\ref{decomposition1} and~\ref{compos-approx}, noticing that $S_c^{k,d}$ is inverse-invariant (for any $\tau \in S_c^{k,d}, \tau^{-1} \in S_c^{k,d}$), we can immediately obtain following theorem.
\begin{theorem}\label{singleapprox} 
Suppose $G$ is a group of diffeomorphisms over $\mathbb{R}^d$, if $G$ has $C^k$-universality for $S_c^{k,d}$, then $G$ has $C^k$-universality for $\text{Diff}_c^k(\mathbb{R}^d)$. 
\end{theorem}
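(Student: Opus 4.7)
The plan is to combine Theorems~\ref{decomposition1} and~\ref{compos-approx} essentially off the shelf, once one small verification is in place: that $S_c^{k,d}$ is inverse-invariant. This is not completely trivial but is short. Given $\tau \in S_c^{k,d}$ altering only coordinate $i$ via $\tau_i(\bm{x})$ that is monotonic in $x_i$, the inverse map must fix every coordinate $j \neq i$ because $y_j = x_j$ for those indices, and the $i$-th coordinate of $\tau^{-1}(\bm{y})$ is the unique solution $x_i = \psi(\bm{y})$ of $y_i = \tau_i(y_1,\ldots,y_{i-1},x_i,y_{i+1},\ldots,y_d)$. Existence, uniqueness, and $C^k$-smoothness of $\psi$ follow from monotonicity in $x_i$ combined with $\det|D\tau| \neq 0$ and the implicit function theorem; $\psi$ is monotonic in $y_i$ for the same reason. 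Compact support is preserved since $\tau = I$ outside a compact $K$ forces $\tau^{-1} = I$ outside $K$. Hence $\tau^{-1} \in S_c^{k,d}$.

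Next I would apply Theorem~\ref{compos-approx} with $F = S_c^{k,d}$. By hypothesis $G$ has $C^k$-universality for $F$, so the theorem gives $C^k$-universality for the semigroup $\mathcal{F}$ generated by $F$. Because $F$ is inverse-invariant, the same theorem tells us that $\mathcal{F}$ is in fact a group; it is clearly a subgroup of $\text{Diff}_c^k(\mathbb{R}^d)$ (products and inverses of compactly supported $C^k$-diffeomorphisms stay in $\text{Diff}_c^k(\mathbb{R}^d)$), and by construction it contains $S_c^{k,d}$.

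Finally I would invoke Theorem~\ref{decomposition1}, which states that any subgroup $H \subseteq \text{Diff}_c^k(\mathbb{R}^d)$ with $S_c^{k,d} \subseteq H$ must coincide with $\text{Diff}_c^k(\mathbb{R}^d)$. Applying this to $\mathcal{F}$ yields $\mathcal{F} = \text{Diff}_c^k(\mathbb{R}^d)$, and therefore $G$ has $C^k$-universality for the full group $\text{Diff}_c^k(\mathbb{R}^d)$, as required.

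There is no real obstacle at this stage: all the analytic work has already been absorbed into Theorems~\ref{decomposition1} and~\ref{compos-approx}. The only point that deserves care is the inverse-invariance check above, where one has to remember to cite the monotonicity clause in the definition of $S_c^{k,d}$ (otherwise the inverse might not even exist as a smooth single-coordinate map). Once that is noted, Theorem~\ref{singleapprox} is a one-line composition of the two preceding theorems.
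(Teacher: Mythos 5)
Your proposal is correct and follows exactly the paper's route: the paper also obtains Theorem~\ref{singleapprox} by combining Theorem~\ref{decomposition1} with Theorem~\ref{compos-approx}, noting parenthetically that $S_c^{k,d}$ is inverse-invariant. Your explicit verification of that inverse-invariance (via monotonicity, the implicit function theorem, and preservation of compact support) simply fills in a detail the paper states without proof, so there is nothing substantively different here.
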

According to Thm.~\ref{singleapprox}, any INNs that approximate $S_c^{k,d}$ will eventually approximate $\text{Diff}_c^k(\mathbb{R}^d)$. This is a generic result and is first stated in \cite{Teshima2020CouplingbasedIN} for $L^p$ and $L^\infty$ cases. We generalize this result to $C^k$ cases and also to parametric cases in the next section.

{\bf Affine-Coupling Flows.}
Now we only need to prove the universality of $\mathcal{G}\text{-INN}_{d}$ over $S_c^{k,d}$, when $\mathcal{G}$ contains  $\mathcal{H}\text{-SACF}_{d}$. However, even for such a simple class of diffeomorphisms, $\mathcal{G}\text{-INN}_{d}$ is not able to approximate it very well: existing works \cite{Teshima2020CouplingbasedIN} shows that $\mathcal{G}\text{-INN}_{d}$ has $L^p$ universality over $S_c^{k,d}$. While they gave a negative conjecture for $L^\infty$ universality, let alone the $C^k$-universality.

However, despite the lack of easily computed log-determinate for density evaluation, when the single-transform is embedded into a higher dimension Euclidean space by padding zeros, the approximation ability becomes much stronger. We prove that the approximation can achieve $C^k$-universality, far beyond $L^p$ universality. We first define the canonical immersion operator $\iota_d$, and its pseudoinverse, canonical submersion $\pi_d$:
\begin{align*}
    \iota_d &\colon 
    (x_1,\cdots,x_d) \mapsto (x_1\cdots,x_d,0),\\
    \pi_d &\colon 
    (x_1,\cdots,x_d,x_{d+1}) \mapsto (x_1\cdots,x_d).
\end{align*}
In Thm.~\ref{single-approx}, we show that by raising the dimension by 1, we can achieve $C^{k}$ -universality. A proof is stated in~\ref {single-approx-chapter}.
\begin{theorem} \label{single-approx}
For any compact set $K\subseteq\mathbb{R}^d$, and any $\tau \in S_c^{k,d}$, $\tau: (x_1, x_2, \cdots, x_{d-1}, x_{d}) \longrightarrow \left(x_1, x_2, \cdots, x_{d-1}, \tau_d(\bm{x})\right) $, there exists $\tilde \tau$ in $ \mathcal{G}\text{-INN}_{d+1}$ with $\mathcal{G} \supseteq  \mathcal{H}\text{-SACF}_{d+1}$, such that $\Vert \tau - \pi_d\circ \tilde \tau\circ \iota_d\Vert _{C^{ k}(K)} < \epsilon$, given that $\mathcal{H}_d$ has $C^k$-universality over functions $K\to \mathbb{R}$. 
\end{theorem}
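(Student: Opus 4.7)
The plan is to realize $\tau$ exactly as $\pi_d\circ T\circ\iota_d$ for some composition $T=T_2\circ P\circ T_1$ of two additive coupling layers and a coordinate permutation in dimension $d+1$, and then replace the scalar ingredients of $T_1,T_2$ by $C^k$-approximations drawn from $\mathcal{H}_d$, using a composition-stability estimate to propagate $C^k$-closeness of the ingredients to $C^k$-closeness of the composed map.

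\textbf{Exact template.} Because $\tau\in S_c^{k,d}\subset\text{Diff}_c^k(\mathbb{R}^d)$, the Jacobian determinant $\det D\tau=\partial_{x_d}\tau_d$ is nonvanishing, so $\tau_d$ is strictly monotone in $x_d$. Writing $\tau^{-1}(\bm{y})=(y_1,\ldots,y_{d-1},h(\bm{y}))$ then yields a $C^k$ function $h:\mathbb{R}^d\to\mathbb{R}$ with $h(\bm{y})=y_d$ outside a compact set. Define on $\mathbb{R}^{d+1}$
\begin{align*}
T_1(\bm{x},z)&=(\bm{x},\,z+\tau_d(\bm{x})),\\
P(x_1,\ldots,x_{d-1},x_d,z)&=(x_1,\ldots,x_{d-1},z,x_d),\\
T_2(\bm{y},w)&=(\bm{y},\,w-h(\bm{y})).
\end{align*}
Each $T_i$ is an additive single-coordinate coupling of type $\Phi_{d+1,d,\sigma,t}$ with $\sigma\equiv 0$, and $P$ is a permutation matrix, hence $P\in\text{ILT}_{d+1}$. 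Tracking $(\bm{x},0)\mapsto(\bm{x},\tau_d(\bm{x}))\mapsto(x_1,\ldots,x_{d-1},\tau_d(\bm{x}),x_d)\mapsto(x_1,\ldots,x_{d-1},\tau_d(\bm{x}),0)$ gives $\pi_d\circ T_2\circ P\circ T_1\circ\iota_d=\tau$ identically on $\mathbb{R}^d$.

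\textbf{Approximation.} Enlarge $K$ to a compact $K'\subset\mathbb{R}^d$ whose interior contains the first-$d$ coordinate projections of $\iota_d(K)$, $T_1(\iota_d(K))$, and $(P\circ T_1)(\iota_d(K))$. By the $C^k$-universality of $\mathcal{H}_d$, for any $\delta>0$ pick $\tilde\sigma_1,\tilde\sigma_2,\tilde t_1,\tilde t_2\in\mathcal{H}_d$ with $\|\tilde\sigma_i\|_{C^k(K')}<\delta$, $\|\tilde t_1-\tau_d\|_{C^k(K')}<\delta$, and $\|\tilde t_2+h\|_{C^k(K')}<\delta$. Let $\tilde T_i:=\Phi_{d+1,d,\tilde\sigma_i,\tilde t_i}\in\mathcal{H}\text{-SACF}_{d+1}\subseteq\mathcal{G}$, and set $\tilde\tau:=\tilde T_2\circ P\circ\tilde T_1\in\mathcal{G}\text{-INN}_{d+1}$ (inserting the identity in $\text{ILT}_{d+1}$ for the leading linear slot). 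Since the last-coordinate update of $\tilde T_i$ is $z\,e^{\tilde\sigma_i(\bm{u})}+\tilde t_i(\bm{u})$ while that of $T_i$ is $z+t_i(\bm{u})$ with $t_1=\tau_d$, $t_2=-h$, and $z$ stays bounded over the compact sets in play, a routine estimate yields $\|\tilde T_i-T_i\|_{C^k}\le C_1\delta$ on a common compact neighborhood of the exact intermediate images.

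\textbf{Composition stability and main obstacle.} The closing step is the classical composition-stability estimate: if $\|F_i-G_i\|_{C^k}\le C\delta$ on suitable compact sets, the $G_i$ are uniformly $C^k$-bounded on open neighborhoods containing the successive images, and every perturbed image stays inside those neighborhoods, then $\|F_s\circ\cdots\circ F_1-G_s\circ\cdots\circ G_1\|_{C^k}\le C'\delta$ for $C'$ depending only on $k$, $s$, and the uniform bounds. I would prove this by induction on $s$ via Faà di Bruno's formula; the buffer built into the choice of $K'$ guarantees the perturbed images remain inside the required neighborhoods for all sufficiently small $\delta$. Since $\iota_d$ and $\pi_d$ are linear they contribute no blow-up, so taking $\delta$ small enough yields $\|\pi_d\circ\tilde\tau\circ\iota_d-\tau\|_{C^k(K)}<\epsilon$. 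The main obstacle is precisely this step: at derivative order $k$ the chain rule expands into sums over set partitions of products of high-order $F_i$-derivatives evaluated at perturbed base points, so one must simultaneously keep track of uniform $C^k$-bounds on all factors, continuity of $k$-th derivatives under small argument shifts, and the geometric condition that every intermediate image remain inside the fixed buffer neighborhood.
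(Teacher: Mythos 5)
Your proposal is correct and follows essentially the same route as the paper: the exact factorization $\pi_d\circ T_2\circ P\circ T_1\circ\iota_d=\tau$ is precisely the paper's three-step decomposition $\phi_1,\phi_2,\phi_3$ (coupling layer writing $\tau_d$ into the padded slot, a swap, then a coupling layer using the partial inverse of $\tau_d$ to zero it out), and your composition-stability step is exactly the paper's Theorem~\ref{compos-approx}. Your extra care about approximating $\sigma\equiv 0$ within $\mathcal{H}_d$ and the sign of the partial inverse only makes explicit details the paper glosses over.
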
 
We show in Appendix~\ref{eliminating} that, even with this zero-padding, there still exist efficient ways to compute the inverse map of the neural network with a dimension eliminating operator.
% {\color{red}
% \begin{remark}
% Note that, such a padding-zeros approach cannot be implemented for short-cut computation in the context of variational inference. However, because $L^p$ universality, distributional universality has already been guaranteed without padding zeros, and thus no padding-zeros is required to ensure the expressiveness.
% \end{remark}
% \begin{remark}
% With such a padding-zeros approach, if we only aim at getting a forward model, we only need a part of the output with the same dimensionality as the input. When dealing with backward model,i.e., the inverted flow, we should further restrict the remaining part of the output to be 0, mapping the output into a subspace with the same dimensionality as the input. This can be done by combining the trained forward map with further layers for eliminating the remain output dimensions. For details please refer to Appendix~\ref{eliminating}.
% \end{remark}
% }

% If we raise the dimension by only 1, we can reach a $W_{1,\infty}$-universality:

% $(x_1, x_2, \cdots, x_{d-1}, x_{d}) \longrightarrow (x_1, x_2, \cdots, x_{d-1}, \tau_d (x) )$ is able to approximate in $L^p$ space, but not easy to approximate in Sobolev space, however,

% $(x_1, x_2, \cdots, x_{d-1}, x_{d}, 0) \longrightarrow (x_1, x)2, \cdots, x_{d-1}, \tau_d(\bm{x}), 0)$ is easy to approximate in Sobolev space by observing that:

% $(x_1,x_2,\cdots,x_{d},0) \to (x_1,x_2,\cdots,x_{d},x_{d}) \to (x_1,x_2,\cdots, \tau_d(\bm{x}), x_{d}) \to (x_1,x_2,\cdots, \tau_d(\bm{x}), 0)$

\subsection{Parametric cases}\label{parametric}

Following Section~\ref{nonparametric}, we generalize the results to parametric cases,   with some changes to cope with parameters.

Recall the definition of compactly supported parametric diffeomorphisms group Eq.~\eqref{paradiff}, where $\bm{y}$ acts as parameters. One might raise a question: why not directly use the result in non-parametric case, as all the elements in  $\text{Diff}_c^{k,m,d}$ are also in $\text{Diff}_c^{k}(\mathbb{R}^{m+d})$? Note that the decomposition theorem only ensures that $S_c^{k,m+d}$ can generate $\text{Diff}_c^{k}(\mathbb{R}^{m+d})$ and then $\text{Diff}_c^{k,m,d}$, it does not guarantee that  $S_c^{k,m+d}\cap \text{Diff}_c^{k,m,d}$, which is the set of transforms acting on a single coordinate of $\bm{x}$, can generate $\text{Diff}_c^{k,m,d}$. Without such a guarantee, we have to use operators that may alter coordinates of both $\bm{y}$ and $\bm{x}$, and the coordinates of $\bm{y}$ in the output layer should be exactly the same as the coordinates of $\bm{y}$ in the input layer, which restricts the flexibility and increases the  approximation difficulty. Thus we derive new dedicated theorems for parametric cases.

We denote $S_c^{k,m,d}\triangleq S_c^{k,m+d}\cap \text{Diff}_c^{k,m,d}$.
% \begin{align*}\text{Diff}_c^{k,m,d} = &\{ F \in \text{Diff}_c^{k}(\mathbb{R}^{m+d}) : \\
% &F(\bm{y},\bm{x}) = (\bm{y}, f(\bm{y},\bm{x})) \text{ with } \bm{y}\in \mathbb{R}^m, \bm{x}\in\mathbb{R}^d \}.    \end{align*}

\begin{theorem} \label{decompositionpara}
$S_c^{k,m,d}$ is a generator of $\text{Diff}_c^{k,m,d}$: for any subgroup
$H \subseteq \text{Diff}_c^{k,m,d}$ s.t. $S_c^{k,m,d}\subseteq H$, $H = \text{Diff}_c^{k,m,d}$. That is, for any $f \in \text{Diff}_c^{k,m,d}$, there exists $\tau_1, \tau_2, \cdots, \tau_s \in S_c^{k,m,d}, s\in \mathbb{N}$ s.t. $f = \tau_s \circ \tau_{s-1}\circ \cdots \circ \tau_1$.
\end{theorem}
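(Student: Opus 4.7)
The plan is to mirror the two-step decomposition used for Theorem~\ref{decomposition1}, with the added discipline of never touching the first $m$ coordinates at any stage. Concretely, I would establish parametric analogs of Lemma~\ref{neariddecomposition} (every near-identity element of $\text{Diff}_c^{k,m,d}$ factors into at most $d$ elements of $S_c^{k,m,d}$) and of Corollary~\ref{totaldecomposition} (every element of $\text{Diff}_c^{k,m,d}$ factors into finitely many near-identity elements), and then concatenate the two statements exactly as in the non-parametric case.

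For the near-identity step I would copy the inductive single-coordinate construction verbatim from the non-parametric proof. Given $F(\bm{y},\bm{x}) = (\bm{y}, f(\bm{y},\bm{x}))$ with $\|DF-I\|_{C^0}$ small enough, I define $\tau_1,\ldots,\tau_d$ inductively so that after applying $\tau_i \circ \cdots \circ \tau_1$ the first $i$ coordinates of $\bm{x}$ agree with those of $f(\bm{y},\bm{x})$. Monotonicity of each $\tau_i$ in its active variable $x_i$ is supplied by the implicit function theorem applied fiberwise, exactly as in the non-parametric argument, and compact support of $F$ in $(\bm{y},\bm{x})$ transfers to compact support of each $\tau_i$. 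The crucial new observation is that because $F$ already fixes $\bm{y}$, the construction never needs to alter any $y$-coordinate, so every $\tau_i$ lies in $S_c^{k,m,d}$ rather than merely in $S_c^{k,m+d}$. This is precisely the refinement that a direct invocation of Theorem~\ref{decomposition1} fails to give.

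For the path-subdivision step I would connect $F$ to the identity by a $C^1$-continuous path $\{F_t\}_{t\in[0,1]}$ lying entirely inside $\text{Diff}_c^{1,m,d}$, then partition $[0,1]$ finely enough that each transition $F_{t_{i+1}} \circ F_{t_i}^{-1}$ is near-identity in the sense required by the previous step. Once the path is available, the subdivision is a standard uniform continuity argument and the upper bound on $s$ follows in the same way as in the remark after Theorem~\ref{decomposition1}.

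The main obstacle is producing that path inside the restricted subgroup $\text{Diff}_c^{k,m,d}$. The naive Alexander contraction $F_t(\bm{y},\bm{x}) = (\bm{y}, t^{-1} f(\bm{y}, t\bm{x}))$ preserves the $\bm{y}$-coordinate by construction but is only $C^0$ at $t=0$ in general and further requires $f(\bm{y},\bm{0})=\bm{0}$ to be well defined; a flow-of-vector-field construction is $C^k$-smooth but not obviously compactly supported in $\bm{y}$. I expect the cleanest route is a fibered version of the non-parametric path-connectedness result of Banyaga cited in the remark after Theorem~\ref{decomposition1}: since $\bm{y}$ varies only over the compact set $L \subset \mathbb{R}^m$ outside which $F$ is already the identity, one can run Banyaga's construction with $\bm{y}$ as a smooth parameter, producing an isotopy whose intermediate maps all retain the product form $(\bm{y},\bm{x}) \mapsto (\bm{y}, f_t(\bm{y},\bm{x}))$ and whose supports stay uniformly bounded. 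Once this parametric isotopy is in place, the rest of the proof is the same formal composition of the two steps that drives Theorem~\ref{decomposition1}.
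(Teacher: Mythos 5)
Your overall architecture coincides with the paper's: the near-identity factorization into single-coordinate transforms is indeed reused verbatim (the paper observes, exactly as you do, that the constructions of Theorem~\ref{firstdecomposition} and Corollary~\ref{totaldecomposition} preserve any coordinates that the map already fixes, so all factors land in $S_c^{k,m,d}$), and the path-subdivision step is the unchanged argument of Lemma~\ref{neariddecomposition}. Where you diverge is the single genuinely new ingredient of the parametric case --- the existence of an isotopy from $F$ to the identity that stays inside $\text{Diff}_c^{k,m,d}$ --- which you leave as an expected ``fibered Banyaga'' construction rather than prove. The paper settles this point (its parametric connectivity theorem, which feeds Theorem~\ref{neariddecompositionpara}) with precisely the Alexander-type contraction you discarded, only in the other normalization: $H\left(\left(\bm{y},\bm{x}\right),t\right) = t\, f\left(\bm{y}/t,\bm{x}/t\right)$ on $t\in[\epsilon,1]$, which preserves the $\bm{y}$-block because $f$ does, needs no hypothesis of the form $f(\bm{y},\bm{0})=\bm{0}$, and by compact support is $C^0$-close to the identity once $\epsilon$ is small; the loss of regularity at $t=0$ that led you to reject the contraction is then repaired not by Banyaga's machinery but by prolonging to a compactly supported $C^0$ isotopy and invoking density of compactly supported $C^k$ isotopies with the first $m$ coordinates fixed inside the $C^0$ ones. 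So your proposal is correct in outline and matches the paper step for step, but its one unproven claim is exactly the lemma that distinguishes the parametric theorem from Theorem~\ref{decomposition1}; if you keep your route you must actually carry out Banyaga's construction with $\bm{y}$ as a smooth parameter and check uniform boundedness of supports, whereas the paper's explicit contraction-plus-smoothing argument is shorter and avoids that machinery altogether.
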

\begin{proof}
The proof follows Thm.~\ref{decomposition1}:

     1) For any subgroup $H \subseteq \text{Diff}_c^{k,m,d}$ such that $S_c^{k,m,d}\subseteq H$, $H$ contains all the  near-identity diffeomorphisms belonging to  $\text{Diff}_c^{k,m,d}$ . The proof follows Cor.~\ref{totaldecomposition}. 
    
     2) For any subgroup $H \subseteq \text{Diff}_c^{k,m,d}$ such that $H$ contains all the near-identity diffeomorphisms belonging to $\text{Diff}_c^{k,m,d}$, $H =\text{Diff}_c^{k,m,d}$ (Thm.~\ref{neariddecompositionpara}) .
\end{proof}
With the assistance of the above theorem, we can easily obtain the $C^k$-universality for $\text{Diff}_c^{k,m,d}$ by simply following the procedure in section~\ref{nonparametric}.
\begin{theorem}\label{singleapproxpara}
Suppose $G$ is a group of diffeomorphisms over $\mathbb{R}^{m+d}$. If $G$ has $C^k$-universality for $S_c^{k,m,d}$, then $G$ has $C^k$-universality for $\text{Diff}_c^{k,m,d}$. 
\end{theorem}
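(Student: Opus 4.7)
The plan is to imitate the derivation of Theorem \ref{singleapprox} in the non-parametric setting, with the parametric decomposition theorem playing the role of Theorem \ref{decomposition1}. Concretely, I would combine Theorem \ref{decompositionpara} with Theorem \ref{compos-approx}, applied with ambient space $\mathbb{R}^{m+d}$.

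First, for any target $f \in \text{Diff}_c^{k,m,d}$, Theorem \ref{decompositionpara} produces a finite factorization $f = \tau_s \circ \tau_{s-1} \circ \cdots \circ \tau_1$ with each $\tau_i \in S_c^{k,m,d}$. Thus $f$ lies in the semigroup generated by $S_c^{k,m,d}$. Second, I would verify that $S_c^{k,m,d}$ is inverse-invariant: an element $\tau \in S_c^{k,m,d}$ fixes the parameter coordinates $\bm{y}$ and all but one of the $\bm{x}$-coordinates, acting monotonically in the remaining coordinate; its inverse inherits exactly the same structure and still belongs to $\text{Diff}_c^{k,m,d}$, hence to $S_c^{k,m,d}$. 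In particular, the semigroup generated by $S_c^{k,m,d}$ coincides with the group it generates, and by Theorem \ref{decompositionpara} this group is all of $\text{Diff}_c^{k,m,d}$.

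Third, I would invoke Theorem \ref{compos-approx} in dimension $m+d$ with $F = S_c^{k,m,d}$. The hypothesis that $G$ has $C^k$-universality for $F$ is precisely the assumption of the theorem we are trying to prove. The conclusion of Theorem \ref{compos-approx} then yields $C^k$-universality of $G$ on the entire semigroup generated by $F$, which by the previous paragraph is $\text{Diff}_c^{k,m,d}$. This gives the desired $C^k$-universality of $G$ for $\text{Diff}_c^{k,m,d}$.

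I do not anticipate a serious obstacle, since all the heavy lifting has been packaged into the two previously established results. The only point requiring a little care is confirming that Theorem \ref{compos-approx}, stated generically for diffeomorphisms over a Euclidean space, applies verbatim with the space taken to be $\mathbb{R}^{m+d}$; this is immediate because the statement and its proof depend only on the ambient dimension, not on any splitting of coordinates into parameters versus state. The other subtlety is that the intermediate factors $\tau_{j} \circ \cdots \circ \tau_1$ that appear when iterating the two-factor composition argument all remain in $\text{Diff}_c^{k,m,d}$ (as this set is closed under composition), so approximating them piece-by-piece in $C^k$ really does produce an approximation of $f$ within the same class.
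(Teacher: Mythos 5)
Your proposal is correct and follows essentially the same route as the paper: the paper derives Theorem~\ref{singleapproxpara} by combining the parametric decomposition result (Theorem~\ref{decompositionpara}) with the composition-approximation result (Theorem~\ref{compos-approx}) in $\mathbb{R}^{m+d}$, exactly mirroring the non-parametric argument for Theorem~\ref{singleapprox}. Your explicit check of inverse-invariance of $S_c^{k,m,d}$ and of the applicability of Theorem~\ref{compos-approx} in ambient dimension $m+d$ fills in the details the paper leaves implicit.
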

% \begin{figure*}[ht]
%     \vskip 0.2in
%     \centering
%     \includegraphics[width = 0.8 \linewidth]{mlp-degenerativeness1-fc.jpg}
%     \caption{Network structure of affine coupling flow for contextual bandits}
% \end{figure*}
A proof of   $\mathcal{G}\text{-INN}_{m,d}$ universality over $S_c^{k,m,d}$, when $\mathcal{G}$ contains  $\mathcal{H}\text{-SACF}_{m,d}$, follows Thm.~\ref{single-approx} in section~\ref{nonparametric}, and thus the detail of the proof is skipped here.

\begin{figure*}[ht]
\vspace*{-5mm}
    \vskip 0.2in
    \centering
    \includegraphics[width = 0.9 \linewidth]{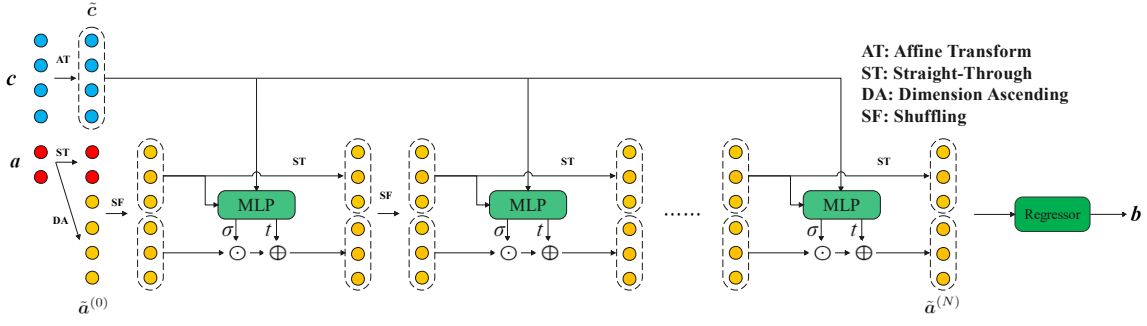}
    \caption{Network structure of affine coupling flow for contextual BO}
\vspace*{-2mm}
    \label{Fig: para-flow-architecture}
\end{figure*}
\section{\texttt{Para-CFlows} for Contextual Bayesian Optimization}
% In many real applications, 

% we need to optimize a set of  parameter $\bm{a}=[a_1, a_2, \cdots, a_k]^T$ an unknown function given a specific context $\bm{c}_t$ at iteration $t$, i.e.
% \[
% \bm{a}^{\ast} = \arg\max_{\bm{a}} f(\bm{c}_t, \bm{a}).
% \]
% Such tasks can be found in many real-world complicated system optimization problems including telecommunication networks management, real-time resource allocation etc. 

% The reason why we say the function is unknown is because usually the system is too complicated to have an analytic mathematical model that can precisely depict the relationship between the parameters, the context or state and the target variable. 
% Nevertheless, we are still interested in optimizing it. 
In real applications such as control and optimization of telecommunication networks, power grid systems, and electronic systems, we aim to optimize a complicated system under various situations. We denote by $\Phi(\bm{c}, \bm{a})$ as the system of interest, where $\bm{a}$ is a set of actions one can tune and $\bm{c}$ is a context vector representing exogenous stimuli. 
% The system will operate itself according to some very complex and unknown mechanisms.  We have the following assumptions on the system:
% \begin{assumption}
% The system's inner states $\Phi$ always depend on the parameter configuration $\bm{a}$ at any given stimuli $\bm{c}_0$, no matter how weak the parameters' effects are, i.e., $\Phi=\Phi(\bm{c},\bm{a}_0)$ and $\frac{\partial \Phi}{\partial \bm{a}} \not\equiv \bm{0}$.
% \end{assumption}
We assume $\bm{a} \in \mathcal{A} \subseteq\mathbb{R}^{d_a}$, $\bm{c} \in \mathcal{C}\subseteq\mathbb{R}^{d_c}$ where $\mathcal{A}$ and $\mathcal{C}$ are  connected and have positive Lebesgue measure. 
\begin{assumption}\label{info-preserve-assumption}
The system inner state $\Phi(\bm{c}, \bm{a}): \mathcal{C}\times\mathcal{A}  \to \mathbb{R}^{d_\phi}$ is a differentiable function and preserves full information of the actions, i.e., $\text{rank}\left(\frac{\partial \Phi}{\partial \bm{a}}\right) = d_a$.
% Note that in this situation, by Inverse Function Theorem, if we denote $\Phi_{\bm{c}}(\bm{a}) \overset{\triangle}{=} \Phi(\bm{c},\bm{a})$, then for any fixed $\bm{c}_0 \in \mathcal{C}$, there exists an inverse map $\Phi^{-1}_{\bm{c}_0}: \text{Range}(\Phi_{\bm{c}_0})\subseteq \mathbb{R}^{d_\phi} \to \mathbb{R}^{d_a}$, such that $\bm{a} = \Phi^{-1}_{\bm{c}_0}(\Phi_{\bm{c}_0}(\bm{a})), \text{ for all } \bm{a} \in \mathcal{A}$. 
\end{assumption}
% \begin{figure}[ht]
% \vspace*{-8mm}
% \begin{tikzpicture} [
%     auto,
%     block/.style    = { rectangle, draw=blue, thick, 
%                         fill=blue!20, text width=10em, text centered,
%                         rounded corners, minimum width=2cm, minimum height=1cm},
%     line/.style     = { draw, thick, ->, shorten >=2pt, >=stealth},
%   ]

%   \matrix [column sep=20mm, row sep= 6mm] {
%     & \node (c) {};            & \\
%     \node (a){}; & \node [block] (phi)
%         {$\Phi(\boldsymbol{c}, \boldsymbol{a})$};
%                   & \node (b) {}; \\
%   };

%   \begin{scope} [every path/.style=line]
%     \path (c)    --  node[anchor=east, text width=1cm] {$\boldsymbol{c} \in \mathcal{C}$}   (phi);
%     \path (a)    --  node[anchor=south, text width=1cm] {$\boldsymbol{a} \in \mathcal{A}$}  (phi);
%     \path (phi)  --  node[anchor=south, text width=1.5cm] {Target $b$}  (b);
%   \end{scope}
% \end{tikzpicture}
% \vspace*{-5mm}
% \caption{a system preserving all information of parameters.}
% \end{figure}
% We call the system's inner states $\Phi(\bm{c}, \bm{a})$ preserves all information of parameters $\bm{a}$. 
In most cases, we are only interested in optimizing a function $b: \mathbb{R}^{d_\phi} \to \mathbb{R}$ of the inner state $\Phi(\bm{c},\bm{a})$, i.e.,
\[
\bm{a}^{\ast} = \arg\max_{\bm{a}} b(\Phi(\bm{c},\bm{a})),
\]
given a context $\bm{c}$. Though we have no exact form of the function, we can query its value under specific action configuration at a given context. Our task is to quickly find the optima at each context using as few queries as possible. 

Many works have been proposed using contextual Gaussian process \cite{Krause2011ContextualGP,Chung2020OfflineCB}, as well as 
% The basic idea is the use a Gaussian Process (GP) to model the function from $\bm{x}=[\bm{a}^T,\bm{c}^T]^T$ to y, where $\bm{x}\in \mathcal{X}=\mathcal{A}\times \mathcal{A}$ and $\bm{a}\in \mathcal{A}$ and $\bm{c}\in \mathcal{C}$.
% Suppose we have collected observations $\bm{B}=[b_1, b_2, \cdots, b_T]^T$ and $\bm{X}=[\bm{x}_1, \bm{x}_2, \cdots, \bm{x}_T]$, a GP is then use to construct the surrogate model with uncertainty measurement, i.e.
% \[
% \begin{split}
%     &\mu_T(\bm{x}) = \bm{k}_T(\bm{x})^T(\mathbf{K}_T + \sigma^2 \bm{I})^{-1}\bm{y}_T,\\
%     &\sigma^2_T(\bm{x}) = k(\bm{x}, \bm{x}) - \bm{k}_T(\bm{x})^T(\mathbf{K}_T + \sigma^2 \bm{I})^{-1}\bm{k}_T(\bm{x}).
% \end{split}
% \]
% We also write $\mu_T(\bm{x})$ as $\mu_T(\bm{a}, \bm{c})$ and $\sigma(\bm{x})$ as $\sigma(\bm{a}, \bm{c})$.
% Once the surrogate model as well the the uncertainty model are ready, given a specific context $\bm{c}_t$, the best parameter or action is picked according to the following:
% \[
% \bm{a}_t^{\ast} = \arg\max_{\bm{a}}   \mu_{t-1}(\bm{a}, \bm{c}_t) + \beta_{t-1} \sigma_{t-1}(\bm{a}, \bm{c}_t).
% \]
deep neural networks \cite{Allesiardo2014ANN,Zhou2020NeuralCB}, random forest \cite{Fraud2016RandomFF,Hutter2011SequentialMO}, etc. to construct the surrogate and acquisition function guiding the next query. However, existing methods only work well for action-\textbf{sensitive} tasks, i.e. the norm of the partial derivatives of $b$ w.r.t.~ $\bm{a}$ is not significantly less than that w.r.t.~$\bm{c}$.
However, in many real applications, it is more common that action is \textbf{insensitive}. A naive implementation concatenating $\bm{a}$ and $\bm{c}$ together as inputs of a model usually does not work well, as the surrogate model $\mu(\bm{c},\bm{a})$ can easily degenerate to $\mu(\bm{c})$ due to limited sample size. The problem is even worse when $dim(\bm{c})\gg dim(\bm{a})$. In most cases such a degenerated model performs quite bad.

To address the challenge, we propose our \texttt{Para-CFlow} (see Fig.\ref{Fig: para-flow-architecture} for the concrete model architecture) to learn a surrogate that preserves the correct sensitivity of the actions.

% In this section, we propose a concrete deep neural network architecture of \texttt{Para-CFlow} in Fig.~\ref{Fig: para-flow-architecture} for our contextual bandit tasks. Note that we mainly need the INN to preserve the informations of actions $\bm{a}$, we may map the action $\bm{a}$ into a higher dimensional space to increase expressiveness, at the same time guarantee no degenerate of $\bm{a}$. Firstly we map the original actions to a nonlinear high dimension feature space, i.e. 
% \[
% [\bm{a}, \bm{c}]^T \mapsto \phi_{\bm{a}}(\bm{a},\bm{c}),
% \]
% where $\phi_{\bm{a}}(\bm{a},\bm{c})\in \mathbb{R}^{2d}$. Then the estimation of $\bm{b}$ can be realized by picking up a certain dimension of $\phi_{\bm{a}}(\bm{a},\bm{c})$, i.e.
% \[
% {b} = \phi_{\bm{a},\kappa}(\bm{a},\bm{c}).
% \]
% Note that for one thing, we expect the nonlinear mapping can be expressive enough to capture the complex relation between $b$ and $\bm{a}$, $\bm{c}$ as well as the complex coupling of $\bm{a}$ and $\bm{c}$, for another thing, we also hope that the nonlinear mapping does not degenerate to a nonlinear mapping which purely depends on $\bm{c}$.
% To guarantee the non-dengenerativeness, we require that one is able to recover full information of $\bm{a}$ from $\phi_{\bm{a}}(\bm{a}, \bm{c})$ and $\bm{c}$, i.e. there exists an inverse mapping $\phi^{-1}_{\bm{a}}$ such that 
% \[
% \bm{a} = \phi^{-1}_{\bm{a}}(\phi_{\bm{a}}(\bm{a}, \bm{c}), \bm{c}).
% \]
% However, we do not require the invertability of $\bm{c}$.
We first map $\bm{a}$ into a higher dimensional space $\mathbb{R}^d$ ($d > d_a$):
\[
    \phi_0: 
    \bm{a} \mapsto  (  \bm{a},\bm{a}\bm{W})
    \triangleq\tilde{\bm{a}}^{(0)}, 
\]
where $\bm{W}\in \mathbb{R}^{d_a \times (d-d_a)}$. By construction, $\text{rank}(\phi_0) = d_a$.

We further compose $\phi_0$ with a series of invertible affine coupling layers $\phi_i, 1\le i \le N$ operating in $\mathbb{R}^d$ (Eq.~\eqref{couplinglayer}). For any $\bm{x}\in \mathbb{R}^d$, let $d' = \max\{d_a, [\frac{d}{2}]\}$, we define 
\begin{align}\phi_{\bm{c},i}(\bm{x}) &= \phi_{\bm{c},i}(\bm{x}_{\le d'},\bm{x}_{>d'}) \nonumber 
= \left(\bm{x}_{\le d'}, \bm{x}_{>d'} \odot \exp(\sigma_i(\bm{c},\bm{x}_{\le d'})) + t_i(\bm{c}, \bm{x}_{\le d'})\right),\nonumber \end{align}
$\sigma_i, t_i$  are some functions with $d-d'$ output units, implemented by any nonlinear functions like deep neural networks and we parameterize them by $\bm{\Theta}_{\sigma,i}$ and $\bm{\Theta}_{t,i}$.

Now we can define $\tilde{\bm{a}}^{(i)}$  following the recurrence relation: $\tilde{\bm{a}}^{(i)} = \phi_i{P}_i(\tilde{\bm{a}}^{(i-1)}), 1\le i \le N$, where ${P}_{i}$ are any fixed random permutes. Thus, we write 
\begin{equation}\label{phi}
     \tilde{\bm{a}}^{(N)} =  (\phi_{\bm{c},N}{P}_N)\circ\cdots\circ(\phi_{\bm{c},1}{P}_{1})\circ\phi_0(\bm{a}) \triangleq\phi(\bm{c}, \bm{a}),
\end{equation}
% \begin{align} \tilde{\bm{a}}^{(N)} &=  (\phi_{\bm{c},N}{P}_N)\circ\cdots\circ(\phi_{\bm{c},1}{P}_{1})\circ\phi_0(\bm{a}) \overset{\triangle}{=}\phi(\bm{c}, \bm{a}) \label{phi}
% \end{align} % \circ ({P}_{N-1}\phi_{\bm{c}, N-1})
which is followed by a simple function $\hat b: \mathbb{R}^{d_a} \to \mathbb{R}$ represented by a simple neural network parametrized by $\bm{\Theta}_{\hat b}$. Then the whole model is trained by minimizing the Mean Square Error (MSE):
% \begin{align}
% \bm{W}^{\ast},&\left\{\bm{\Theta}_{\sigma,i}^{\ast}\right\}_{i=1}^{N},\left\{\bm{\Theta}_{t,i}^{\ast}\right\}_{i=1}^{N-1}, \bm{\Theta}_{b}^{\ast} \nonumber \\
% &= \arg\min \frac{1}{M} \sum_{1\le j \le M} \left(b_j -  \hat b\left(\phi(\bm{c}_j,\bm{a}_j)\right)\right)^2
% \end{align}
\[
    \min_{\bm{W}, \bm{\Theta}} 
    \frac{1}{M} \sum_{1\le j \le M} \left(b_j - \hat{b}(\phi(\bm{c}_j, \bm{a}_j))\right)^2,
\]
where $\bm{\Theta} = \{\bm{\Theta}_{\sigma,1}, \dots, \bm{\Theta}_{\sigma,N}, \bm{\Theta}_{t,1}, \dots, \bm{\Theta}_{t,N}, \bm{\Theta}_{b}\}$ 
and $b_j$ are sampled data of the system given $(\bm{c}_j,\bm{a}_j)$ which we assume to have the form $b_j = b( \Phi(\bm{c}_j,\bm{a}_j))$. 

As such, we construct a neural network that preserves full information (the rank) of $\bm{a}$. We further investigate if \texttt{Para-CFlow} is a universal approximator to all these information-preserving systems satisfying Asmp.~\ref{info-preserve-assumption}.

\begin{theorem}
For any system $\Phi: \mathcal{C}\times\mathcal{A}  \to \mathbb{R}^{d_\phi}$ satisfying  Asmp.~\ref{info-preserve-assumption}, if $d \ge 2d_\phi$ and $\Phi$ have up to $k$-th order derivatives, then for any $\epsilon > 0$, we can always find a $\phi$  in the form~\eqref{phi}, such that $\Vert  \Phi(\bm{c},\bm{a}) - \pi \phi(\bm{c},\bm{a})\Vert_{C^k(\mathcal{C}\times\mathcal{A})} \le \epsilon$, given that $\sigma_i, t_i, \hat b$ are $C^k$-universal function approximator. Here $\pi$ is the canonical submersion: $\mathbb{R}^{d} \to \mathbb{R}^{d_\phi}$, $(x_1,x_2,\cdots, x_{d_\phi},\cdots,x_d) \mapsto (x_1,x_2,\cdots,x_{d_\phi})$.
\end{theorem}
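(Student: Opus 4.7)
The plan is to reduce the claim to Theorem~\ref{singleapproxpara} and the parametric analogue of Theorem~\ref{single-approx}, by realizing $\Phi$ as the $\pi$-projection of a compactly supported parametric diffeomorphism on $\mathbb{R}^d$. Since $\partial_{\bm{a}}\Phi$ has rank $d_a$ into $\mathbb{R}^{d_\phi}$ we have $d_a\le d_\phi$, so the hypothesis $d\ge 2d_\phi$ ensures $d\ge d_a+d_\phi$. After extending $\Phi$ smoothly from $\mathcal{C}\times\mathcal{A}$ to $\mathcal{C}\times\mathbb{R}^{d_a}$ by Whitney extension, split $\mathbb{R}^d\cong\mathbb{R}^{d_a}\oplus\mathbb{R}^{d_\phi}\oplus\mathbb{R}^{d-d_a-d_\phi}$ and define the ``graph-trick'' lift
\begin{equation*}
g(\bm{c},\bm{x}_1,\bm{x}_2,\bm{x}_3)\;=\;\bigl(\Phi(\bm{c},\bm{x}_1)+\bm{x}_2,\;\bm{x}_1,\;\bm{x}_3\bigr).
\end{equation*}
Its explicit global inverse $(\bm{y}_1,\bm{y}_2,\bm{y}_3)\mapsto (\bm{y}_2,\bm{y}_1-\Phi(\bm{c},\bm{y}_2),\bm{y}_3)$ shows that $G(\bm{c},\bm{x})\triangleq(\bm{c},g(\bm{c},\bm{x}))$ lies in $\text{Diff}^{k,d_c,d}$, and the key identity $\pi\circ g(\bm{c},\iota(\bm{a}))=\Phi(\bm{c},\bm{a})$ holds for the zero-padding $\iota:\bm{a}\mapsto(\bm{a},0_{d-d_a})$, which coincides with $\phi_0$ upon setting $\bm{W}=0$.

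Next, fix a compact $K\subseteq\mathbb{R}^{d_c+d}$ containing an open neighborhood of $\mathcal{C}\times\iota(\mathcal{A})$, and invoke the parametric version of the compactly-supported extension fact cited at the start of Section~\ref{nonparametric} to obtain $\bar G\in\text{Diff}_c^{k,d_c,d}$ that agrees with $G$ on $K$. Theorem~\ref{singleapproxpara} combined with the parametric counterpart of Theorem~\ref{single-approx} (whose hypothesis is the $C^k$-universality of $\sigma_i,t_i$) then yields, for any $\epsilon'>0$, some $\tilde G\in\mathcal{G}\text{-INN}_{d_c,d}$ of the form $(\bm{c},\tilde g(\bm{c},\cdot))$ with $\|\bar G-\tilde G\|_{C^k(K)}<\epsilon'$, where $\mathcal{G}\supseteq\mathcal{H}\text{-SACF}_{d_c,d}$. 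Setting $\bm{W}=0$ and identifying the composition of coupling layers and permutations in~\eqref{phi} with $\tilde g$ (ILT blocks can be absorbed into permutations by the remark after the definition of INNs, and multi-coordinate couplings decompose into SACFs at the cost of extra permutations), the resulting $\phi(\bm{c},\bm{a})=\tilde g(\bm{c},\iota(\bm{a}))$ fits the template~\eqref{phi}. Because $\iota$ and $\pi$ are bounded linear, selecting $\epsilon'$ small enough gives $\|\Phi-\pi\phi\|_{C^k(\mathcal{C}\times\mathcal{A})}\le\epsilon$.

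The main obstacle is the parametric compactly-supported extension: one must cut $G$ to the identity outside a compact set of $\mathbb{R}^{d_c+d}$ while (i) preserving the product form $(\bm{c},g(\bm{c},\cdot))$, i.e., leaving the first $d_c$ coordinates untouched, and (ii) keeping $g(\bm{c},\cdot)$ invertible on all of $\mathbb{R}^d$ uniformly in $\bm{c}$. A naive convex interpolation between $g$ and the identity can destroy invertibility. Fortunately, the graph-trick map has the convenient property that $g-\mathrm{id}$ is strictly block-upper-triangular in $(\bm{x}_1,\bm{x}_2,\bm{x}_3)$, so smoothly damping only the nonlinear component $\Phi(\bm{c},\bm{x}_1)$ to zero outside $K$ via a product cutoff $\chi(\bm{c})\chi(\bm{x})$ preserves the same block-triangular Jacobian structure and the same closed-form inverse; this yields a bona fide $\bar G\in\text{Diff}_c^{k,d_c,d}$ and completes the argument. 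The universality hypothesis on $\hat b$ in the statement plays no role in the present claim and is presumably included for the companion approximation of $b\circ\Phi$ required by the BO application.
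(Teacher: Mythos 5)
Your graph-trick lift is a genuinely different route from the paper: the paper invokes the Constant Rank Theorem to realize $\Phi(\bm{c},\bm{a})$ as $f_{\bm{c}}(\bm{a},\bm{0})$ for a parametric diffeomorphism $f_{\bm{c}}$ of $\mathbb{R}^{d_\phi}$ only, and then approximates $f_{\bm{c}}$ using the remaining $d-d_\phi\ge d_\phi\ge 1$ padded coordinates, whereas you lift $\Phi$ to the explicit map $g(\bm{c},\bm{x}_1,\bm{x}_2,\bm{x}_3)=(\Phi(\bm{c},\bm{x}_1)+\bm{x}_2,\bm{x}_1,\bm{x}_3)$ on all of $\mathbb{R}^d$. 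Your lift has a real advantage (it never needs $\bm{a}\mapsto\Phi(\bm{c},\bm{a})$ to be injective, which the paper's global factorization implicitly requires), but the step where you cash it in is flawed. You claim that Thm.~\ref{singleapproxpara} together with the parametric counterpart of Thm.~\ref{single-approx} yields $\tilde G\in\mathcal{G}\text{-INN}_{d_c,d}$ with $\Vert\bar G-\tilde G\Vert_{C^k(K)}<\epsilon'$, i.e.\ a $C^k$-approximation of a parametric diffeomorphism of $\mathbb{R}^{d_c+d}$ by an INN acting on the \emph{same} space. The paper does not provide this: Thm.~\ref{single-approx} (and its parametric analogue) only gives approximation of a diffeomorphism moving $D$ coordinates by composites $\pi\circ\tilde\tau\circ\iota$ with $\tilde\tau$ an INN in dimension $D+1$ — one spare zero-padded coordinate is essential to the construction, and the paper explicitly conjectures failure without it even in $L^\infty$. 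Your $\bar G$ moves $d_a+d_\phi$ of the $d$ available non-context coordinates, so running the paper's machinery requires $d\ge d_a+d_\phi+1$; the hypothesis $d\ge 2d_\phi$ only gives $d\ge d_a+d_\phi$, and the needed strict inequality fails precisely when $d_a=d_\phi$ and $d=2d_\phi$. So as written the reduction does not go through.

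The irony is that your construction contains a much shorter correct argument that bypasses the decomposition machinery entirely: $g$ is \emph{itself} one affine coupling layer composed with a block permutation, namely the coupling $(\bm{x}_1,\bm{x}_2,\bm{x}_3)\mapsto(\bm{x}_1,\bm{x}_2+\Phi(\bm{c},\bm{x}_1),\bm{x}_3)$ with log-scale $\sigma\equiv 0$ and shift $t=\Phi$, followed by swapping the first two blocks. Since $d-d'\ge d_\phi$ under the stated hypotheses (if $d'=[\frac d2]$ then $d-d'\ge d_\phi$ because $d\ge 2d_\phi$; if $d'=d_a$ then $d-d_a\ge d_\phi$ because $d_a\le d_\phi$), a single layer of the architecture~\eqref{phi} with $\bm{W}=0$ can write a $C^k$-approximation of $\Phi(\bm{c},\bm{a})$ into the transformed block (using the $C^k$-universality of $t_i$ and taking $\sigma_i$ near zero), the remaining layers can be taken near-identity, and a permutation places the block where $\pi$ reads it off — no zero-padding workspace, no Thm.~\ref{decompositionpara}, and no compactly supported extension are needed, which also disposes of the cutoff subtlety you discuss (note, incidentally, that a cutoff depending on $\bm{x}_2$ destroys your closed-form inverse, while one independent of $\bm{x}_2$ gives identity only outside a cylinder, not a compact set, so that lemma needs the small-gradient cutoff argument rather than the one you sketch). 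You are right that the hypothesis on $\hat b$ is irrelevant to this particular claim.
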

\begin{proof}
By  the famous Constant Rank Theorem \cite{Boothby1975AnIT}, there exists a parametric ($\bm{c}$ plays the role of parameter) diffeomorphism $f_{\bm{c}}(\cdot): \mathbb{R}^{d_\phi} \to \mathbb{R}^{d_\phi}$ such that 
$$f_{\bm{c}}(\bm{a}, \bm{0}_{1\times(d_\phi-d_a)}) = \Phi(\bm{c},\bm{a}), \text{ for all } (\bm{c},\bm{a}) \in \mathcal{C}\times\mathcal{A}.$$ Then it is straightforward to approximate $f_{\bm{c}}$ with Thm.~\ref{single-approx},~\ref{singleapproxpara} given that $d' = \max\{d_a, [\frac{d}{2}]\} \ge d_\phi$.
\end{proof}

So far, we have proposed our \texttt{Para-CFlow} model as a surrogate model for contextual BO tasks. 
As we have already proven, \texttt{Para-CFlow} enjoys the universal approximation property and it preserves non-degeneracy of $\bm{a}$, no matter how many layers are stacked to model the interaction between $\bm{ a}$ and $\bm{c}$. \texttt{Para-CFlow} is expected to guide the contextual BO algorithms to quickly find the optimal action  at any different context. Note that in the setting of contextual BO, we can adopt existing state-of-the-arts such as Bayesian deep learning, deep ensembles or Thompson Sampling, to facilitate the uncertainty modeling.

\section{Experiments}
In this section, 
we conduct experiments to verify the expressiveness of our proposed \texttt{Para-CFlow} 
and demonstrate its advantage as neural surrogate models in contextual BO.

\subsection{Learning diffeomorphism ``Taiji''}\label{taiji}
We demonstrate the expressiveness of our model to learn a parametric diffeomorphism with a synthetic task called ``Taiji". The result demonstrates that, not only the $0$-th order information, but also the $1$-st order information (derivatives) can be learned with good accuracy.

We define our target parametric diffeomorphism: 
\begin{align}f_y: (\rho \cos\theta, \rho \sin\theta) \mapsto (\rho \cos \tilde\theta, \rho \sin\tilde\theta),\nonumber\\
\text{where }\tilde \theta = \theta + y \arccos(\min(\rho,1)). \label{taijieq}\end{align}
On the unit circle $\rho = 1$, it is not differentiable, thus we expect the trained model to have irregular derivatives over the unit circle, while maintain accurate derivatives elsewhere.

First, we design a task with sufficient amount of data to verify the $C^1$ approximation ability of \texttt{Para-CFlows}. We generate samples $\{(\bm{x}_i, y_i)\}_{1 \le i \le 30000}$ with
$\bm{x}_i \sim \mathcal{U}[-1, 1]^2$ and $y_i \sim \mathcal{U}[0, 1]$ independently. 
Under the polar coordinate representation, i.e., $\bm{x}_i = (\rho_i \cos \theta_i, \rho_i \sin \theta_i)$, we calculate the target $f_{y_i}(\bm{x}_i) = (\rho_i \cos \tilde \theta_i,\rho_i \sin \tilde \theta_i)$ according to Eq.~\eqref{taijieq}.
We use a 6-layers affine coupling flows, each layer composed with a random permute. The coupling functions $\sigma,t$ are implemented by 1-hidden-layer Multi-Layer Perceptron (\texttt{MLP}) with hidden-unit number comparable to input dimension. We pad a zero to the input $\bm{x}$, and use the first two output dimensions to compute the loss. 

On the left panel of Fig.~\ref{Fig:taijigraph}, we observe good consistency between the trained model $\hat f_{y}(\bm{x})$ and $ f_{y}(\bm{x})$, when $y \in [0,1]$. Note that we only use samples with $y_i \in [0,1]$ for training, and thus the model generalizability to $y >1$ is not guaranteed. However, as the true diffeomorphism satisfies $f_{y_1}\circ f_{y_2} = f_{y_1 + y_2}$, we  iterate the trained model recurrently and expect the final output to achieve the effect
for $y > 1$. The results on the right panel of Fig.~\ref{Fig:taijigraph} justify that the 
cumulative error from composition does not grow fast, and our model can approximate complex diffeomorphisms with sufficient layers. In Fig.~\ref{Fig:derivative} we plot the derivatives of \texttt{Para-CFlow}, compared to the true one. The non-differentiability of the groundtruth on the unit circle results in irregular behaviors of trained model near the unit circle; elsewhere it shows good consistency.

\begin{figure}[ht]
\vspace*{-7mm}
    \vskip 0.2in
    \centering
    \subfigure{
    \includegraphics[width = 0.45\linewidth,height = 0.5\linewidth]{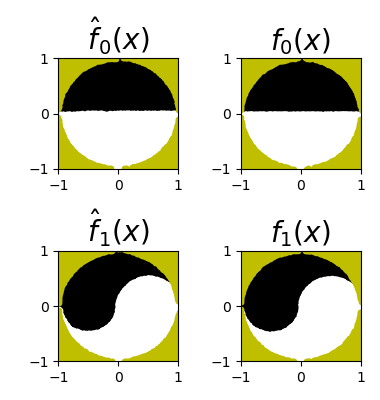}} 
    \subfigure{
    \includegraphics[width = 0.45\linewidth,height = 0.5\linewidth]{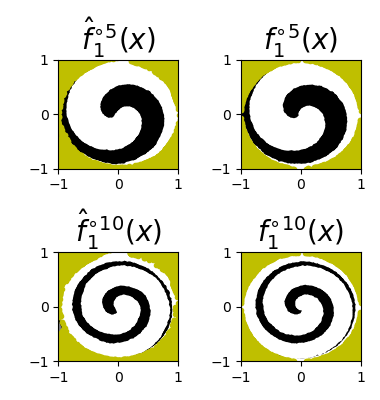}}
 \vspace*{-3mm} 
    \caption{Trained $\hat f_{y}(\bm{x})\text{ v.s. Target }f_{y}(\bm{x})$, learning ``Taiji'' diffeomorphism tasks in section~\ref{taiji}. Black region, $x_2 > 0 \cap \rho \le 1$; white region, $x_2 \le 0 \cap \rho \le 1$; yellow region, $\rho > 1$. $\rho = \sqrt{x_1^2 + x_2^2}$ and $\bm{x} = (x_1,x_2)$. $\hat f_1^{\circ N}$ denotes the $N$-fold composition $\hat f_1\circ\cdots\circ \hat f_1$.}
    \label{Fig:taijigraph}
\end{figure}

\begin{figure}[ht]
\vspace*{-7mm}
    \vskip 0.2in
    \centering
    \subfigure[$\frac{\partial \hat f^{(1)}(y,\bm{x})}{\partial \bm{x}} \text{ v.s. }\frac{\partial f^{(1)}(y,\bm{x})}{\partial \bm{x}}$]{\label{Fig2sub1}
    \includegraphics[height = 0.39\linewidth,width = 0.45\linewidth, trim = 0 0 0 30,clip]{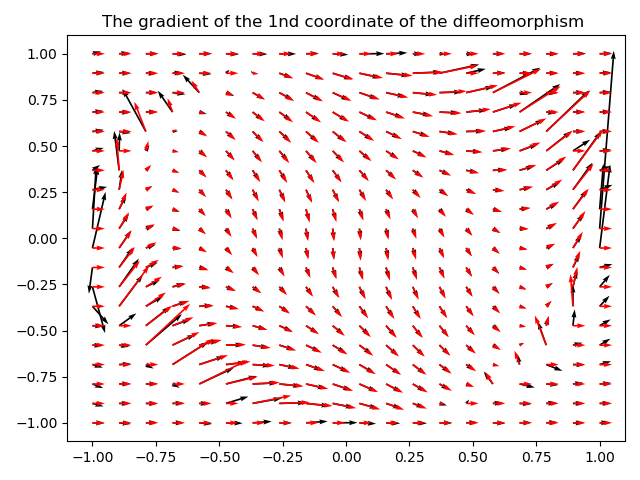}} 
    % }\quad\quad\quad\quad
    % \subfigure[$\frac{\partial \hat f_2(y,\bm{x})}{\partial \bm{x}} \text{ v.s. }\frac{\partial f_2(y,\bm{x})}{\partial \bm{x}}$]{\label{Fig2sub2}
    % \includegraphics[width = 0.2\linewidth]{}
    % }\quad\quad\quad\quad
    \subfigure[$\frac{\partial \hat f(y,\bm{x})}{\partial y} \text{ v.s. }\frac{\partial f(y,\bm{x})}{\partial y}$]{\label{Fig2sub3}
    \includegraphics[height = 0.39\linewidth, width = 0.45\linewidth,trim = 0 0 0 30,clip]{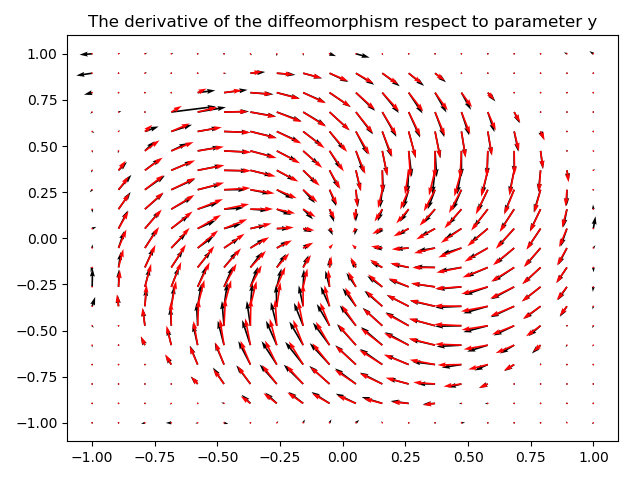}
    }
 \vspace*{-4mm} 
    \caption{Derivatives of the trained model $\hat f_y(\bm{x}) = (\hat f^{(1)}(y,\bm{x}),\hat f^{(2)}(y,\bm{x}))$ with black arrows and the groundtruth $ f_y(\bm{x}) = ( f^{(1)}(y,\bm{x}), f^{(2)}(y,\bm{x}))$ with red arrows, all at $y = 1$. }
   \label{Fig:derivative}
 \vspace*{0mm} 
\end{figure}

\begin{figure}[ht]
\vspace*{-0mm}
    \centering
    \includegraphics[height = 0.54\linewidth ,width = 0.9\linewidth]{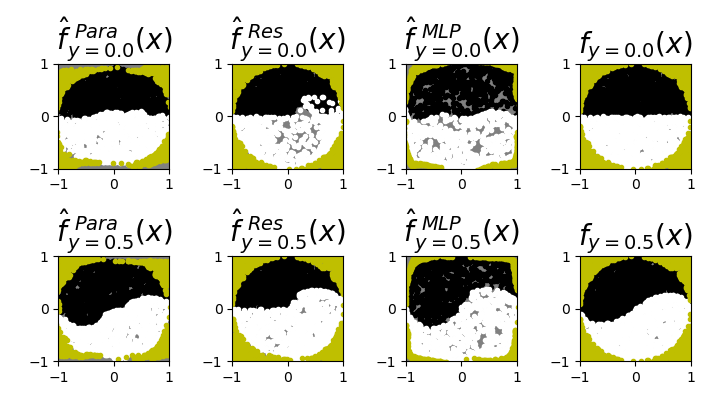}
\vspace*{-2mm}
    \caption{Comparing \texttt{Para-CFlow} with \texttt{Resnet} and \texttt{MLP} with small sample size and parameter $y^{(i)}=y, 1\le i \le 100$. Grey color represents area where no point resides.}
    \label{Fig:compare}
\vspace*{-2mm}
\end{figure}

 Next we design a more challenging task to compare \texttt{Para-CFlow} against a simple \texttt{MLP} with a hidden-layer-setting as $(128,64,32)$, and  Residual Nets (\texttt{Resnet}) \cite{He2016DeepRL}, that concatenates $\bm{y}$ and $\bm{x}$ as the network's input and then appends $\bm{x}$ to the output of each layer forward for prediction. Unlike the previous experiment where a scalar $y$ is used, we generate 3000 samples of a 100-dimensional parameter vector $\bm{y}_i = (y_i^{(1)},\cdots, y_i^{(100)})$ where $y_i^{(j)}  \sim \mathcal{N}(y_i, 0.16)$ and $y_i\sim \mathcal{U}[0,1]$. Then $(\bm{x}_i,\bm{y}_i)$ is fed to each model to learn the target $f_{\text{avg}  (\bm{y}_i)}(\bm{x}_i)$. Fig.~\ref{Fig:compare} shows that \texttt{Para-CFlow} learn a much more continuous mapping than \texttt{MLP} and \texttt{Resnet}. While \texttt{Resnet} learns relatively better the outer part than \texttt{MLP}, similar to $\texttt{MLP}$ it suffers from bad mapping in the inner part, i.e. the grey area inside the unit circle is much larger than \texttt{Para-Flow}, which represents bad continuity of the learnt mapping.

\subsection{Application to contextual BO}\label{Sec:Exp:BO}
The performance of contextual BO depends not only on the exploration-exploitation strategy but also the surrogate model.
Ensemble of neural networks is widely used for surrogate and uncertainty modeling, which is flexible for single or multiple outputs \cite{Balaji2017Simple}.

In this section,
we ensemble \texttt{Para-CFlow}s as a novel surrogate model for BO. 
% However our goal is not to compare it to state-of-the-art algorithms in the field but rather demonstrate its advantage over vanilla neural network surrogates. 
% Any advanced exploration-exploitation strategy, Bayesian deep learning method, sampling strategy can potentially be leveraged to improve the performance but it is out of our paper's scope.
We set our experiments with three well-known benchmark functions: \texttt{Ackley}, \texttt{Trid} and \texttt{Rastrigin} 
\cite{benchmark_functions-web}, which have different features. 
Note that the original benchmark tasks are \textbf{contextless}, 
here we simply set the first $d_c$ dimensions as the context vector sampled from a multivariate uniform distribution and leave the last dimension as the action to optimize. 
%For \texttt{Ackley}, the action dimension is coupled with all context dimensions and for \texttt{Trid}, the action is coupled with just one context dimension, whereas for \texttt{Rastrigin}, all the context and action dimensions are decoupled.
To construct context-dominating tasks where the reward depends much more on the context than the action, we set $d_c\gg 1$. More details are available in Appendix \ref{Sec:Appedix:Exp}.

In all benchmarks above, the contexts $\bm{c}$ are uniformly sampled at random from $[-3,3]^{d_c}$ sequentially while the action $a$ is optimized over $\texttt{linspace}(-3,3,100)$. To compare different surrogate models, we report the cumulative regrets w.r.t. the optimum alongside the context sequence.

The used ensemble-like surrogate models for contextual BO are listed as follows: 
(1) \texttt{Para-CFlow}: Ensemble of \texttt{Para-CFlow}s for prediction with uncertainty; 
(2) \texttt{MLP}: Ensemble of \texttt{MLP}s for uncertainty modelling by concatenating context $\bm{c}$ and $a$ to obtain input $(\bm{c}, a) \in \mathbb{R}^{d_c+1}$; 
(3) \texttt{MLP-Ascend}: Similar to \texttt{MLP}, the difference is that it firstly raises both $\bm{c}$ and $a$ to the same hidden dimension $\max(5, d_c)$ and then concatenates them as inputs to a \texttt{MLP}; 
(4) \texttt{Resnet}: Ensemble of multiple \texttt{Resnet} that concatenates $\bm{c}$ and $a$ as the network's input and then appends $a$ to the output of each layer forward for prediction.
% \begin{itemize}
%     \item \texttt{Para-CFlow}: Ensemble of multiple \texttt{Para-CFlow}s for prediction with uncertainty.
%     \item \texttt{MLP}: Ensemble of Multiple-Layer Perceptrons (MLPs) for uncertainty modelling by concatenating context $\bm{c}$ and $x$ to obtain input $(\bm{c}^{\mathrm{T}}, x)^{\mathrm{T}} \in \mathbb{R}^{d+1}$.
%     \item \texttt{MLP-Ascend}: Similar to \texttt{MLP}. The difference is that it firstly raise both $\bm{c}$ and $x$ to the same hidden dimension $\max(\max(D_a, D_c),5)$ and then concatenates them as input.
%     \item \texttt{ResNet}: Ensemble of multiple ResNet \cite{He2016DeepRL}, 
%     that concatenates $\bm{c}$ and $x$ as the network's input and then appends $x$ to the output of each layer forward for prediction.
%     % \item \texttt{GP}: Gaussian process is a commonly used surrogate model in BO. 
%     % Due to the cubic computational complexity,
%     % we limit the size of training samples up to 200 using randomly sampling.
% \end{itemize}

We implement each of the aforementioned neural networks with similar sizes (details can refer to Appendix \ref{Sec:Appendix:Exp:Settings}) using PyTorch \cite{Adam2019PyTorch}.
We ensemble 5 base models as the surrogate model and use the empirical standard deviation of the base model outputs as the uncertainty measurement.
% except for \texttt{GP}.
We use Lower Confidence Bound (LCB) with $\kappa = 1$ as trade-off strategy 
(The results of using Thompson's sampling \cite{Russo2018Tutorial} are presented in Appendix \ref{Sec:Appedix:Exp:CumulativeRegrets})
and repeat BO with different surrogates for 10 trails for each experiment setting.
In the first 100 steps,
we draw a same batch of uniform random samples of $a$ to initialize all surrogates.
After that we conduct BO step-by-step and 
update the surrogates per 100 steps using newly collected samples respectively.

We report the mean and standard deviation of the cumulative regrets for different surrogate models in Fig.~\ref{Fig:Exp:CumulativeRegrets:LCB}.
%The mean and standard deviation of the cumulative regrets under 10 independent trials for different surrogate models are shown in Fig.~\ref{Fig:Exp:CumulativeRegrets:LCB}.
The dimensionality of context is set to 5, 10, 20,
which is significantly greater than that of action,
to demonstrate the capability of \texttt{Para-CFlow} in preserving the sensitivity of action.
% As we select the settings for each model to make sure they are with comparable capacities, see Appendix \ref{Sec:Appedix:Exp}.
We observe a significant improvement in optimization for applying \texttt{Para-CFlow} in neural surrogate modeling, 
which is probably because it preserves all the information of action, 
whereas compared methods do not. 
To further verify this conjecture, 
we train all models with uniformly randomly sampled data (both context and action) of various sizes on \texttt{Trid} 
and then test the trained models on 10,000 testing contexts, 
each with sweeping the action space.
We calculate the Kendall's Tau (KT) score \cite{Kendall1938} between $f_c(a)$ and $\hat{f}_c(a)$,
which measures the action order consistency between the groundtruth and the prediction.
As shown in Fig.~\ref{Fig:KT},
\texttt{Para-CFlow} exhibits higher KT scores than the other models under higher dimensional context,
which demonstrate that it indeed preserves critical information of the action.

\begin{figure}[t]
    \centering
    \includegraphics[width=\linewidth]{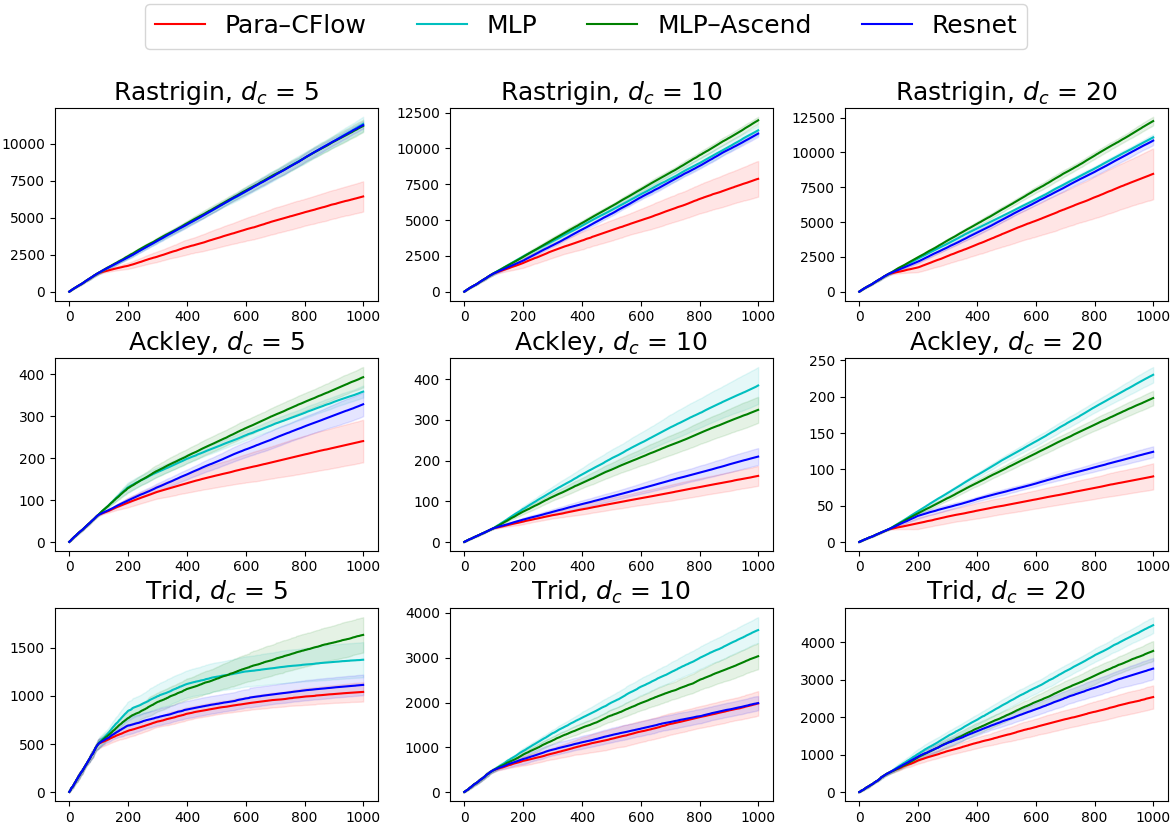}
    \vspace{-2mm}
    \caption{The mean and standard deviation of the cumulative regret under 10 independent trials with the context dimension as 5, 10 and 20 on different benchmarks.}
    \label{Fig:Exp:CumulativeRegrets:LCB}
     \vspace*{-0mm} 
\end{figure}

\begin{figure}[t]
    \centering
    \includegraphics[width=\linewidth]{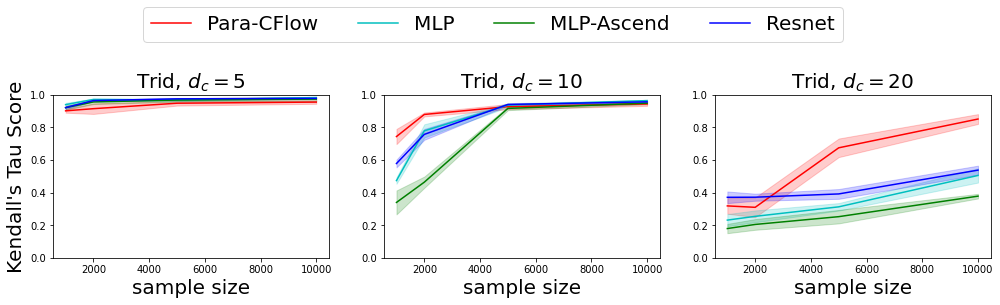}
    \vspace{-4ex}
    \caption{KT scores calculated from 5 independent trials with the context dimension as 5, 10 and 20 on Trid.}
    \label{Fig:KT}
     \vspace*{-2mm} 
\end{figure}
% \begin{table}[t]\scriptsize
%     \centering
%     \renewcommand{\arraystretch}{1.1}
%     % \setlength\tabcolsep{3pt}
%     \caption{Cumulative regret ratios w.r.t. pure exploration.}
%     \label{Tab:Ratio:CumulativeRewords}
%     \begin{tabular}{lcccc}
%         \toprule
%                 & Para-CFlow    & MLP   & MLP-Ascend    & Resnet \\ \midrule
%         Ackley 5      & \textbf{0.39$\pm$0.08} & 0.57$\pm$0.02 & 0.63$\pm$0.03 & 0.52$\pm$0.05 \\
%         Ackley 10    & \textbf{0.50$\pm$0.07} & 1.16$\pm$0.13 & 0.98$\pm$0.09 & 0.64$\pm$0.06 \\
%         Ackley 20    & \textbf{0.55$\pm$0.11} & 1.35$\pm$0.06 & 1.17$\pm$0.04 & 0.74$\pm$0.04 \\
%         \hline
%         Rastrigin 5   & \textbf{0.51$\pm$0.08} & 0.89$\pm$0.03 & 0.88$\pm$0.03 & 0.88$\pm$0.02 \\
%         Rastrigin 10 & \textbf{0.62$\pm$0.09} & 0.88$\pm$0.02 & 0.94$\pm$0.02 & 0.86$\pm$0.02 \\
%         Rastrigin 20 & \textbf{0.66$\pm$0.14} & 0.87$\pm$0.02 & 0.96$\pm$0.03 & 0.85$\pm$0.02 \\
%         \hline
%         Trid 5        & \textbf{0.21$\pm$0.02} & 0.28$\pm$0.04 & 0.34$\pm$0.04 & 0.23$\pm$0.02 \\
%         Trid 10      & \textbf{0.41$\pm$0.07} & 0.75$\pm$0.07 & 0.63$\pm$0.07 & 0.41$\pm$0.05 \\
%         Trid 20      & \textbf{0.53$\pm$0.07} & 0.92$\pm$0.05 & 0.78$\pm$0.05 & 0.68$\pm$0.08 \\
%         \bottomrule
%     \end{tabular}
%     \vspace*{-2mm} 
% \end{table}

\section{Conclusion}
In this paper,  we firstly prove the equivalence between the $C^k$-universality over compactly-supported diffeomorphisms and that over single-coordinate transforms, resulting in the $C^k$ universal approximation ability of affine coupling flows.
Furthermore, we generalize the main theorems to parametric cases and propose a practical model called \texttt{Para-CFlow}s which could serve as a good surrogate for contextual BO. 
With good capabilities both in universal approximation and robust sensing of critical features in parametric diffeomorphisms, we empirically exhibit the advantages of \texttt{Para-CFlow} using various  benchmarks. 
% Further exploring the theory and application in parametric space modeling is an exciting potential route for future work.

% \newpage
% \begin{align*}
% a = 
% \left(
%     \begin{matrix}
% a_1\\
% a_2\\
% \vdots\\
% a_n
% \end{matrix}
% \right)
% \longrightarrow
% r = 
% \left(
%     \begin{matrix}
%  r_1\\
%  r_2\\
% \vdots\\
%  r_n
% \end{matrix}
% \right)
% \end{align*}

% $c$

\clearpage
\bibliography{para-flow}
\bibliographystyle{plain}

\clearpage
\appendix
\section{Different functional norms and corresponding function spaces}\label{normdef}

Here we introduce some preliminary definitions on function norms and functions spaces involved in this paper. %Readers can find them on any Calculus, Real Analysis or Functional Analysis textbooks.

For a measurable mapping $f: \mathbb{R}^n \to \mathbb{R}^d$ and a subset $K \subseteq \mathbb{R}^n$, we define:
\begin{align*} 
    \Vert  f \Vert _{L^p(K)} &\triangleq \left( \int_{K} \Vert  f(\bm{x}) \Vert ^p d\bm{x}\right)^{1/p}, 1 \le p < \infty, \\
    \Vert  f \Vert _{L^\infty(K)} &\triangleq \lim_{p \to \infty} \Vert  f\Vert _{L^p(K)} \xlongequal{f \text{ cont.}} \sup_{\bm{x} \in K} \Vert  f(\bm{x})\Vert, 
\end{align*}
where $\Vert\cdot\Vert$ can be any norm on $\mathbb{R}^d$, as norms on the finite-dimensional vector space are all equivalent. For simplicity, we choose the maximum norm on $\mathbb{R}^d$, i.e., $\Vert (x_1,x_2,\cdots,x_d)\Vert  = \max_{1\le i\le d} |x_i|$.

We can also consider the norm containing derivative information when $f$ is $k$-th differentiable:
\begin{equation*}
\Vert f\Vert _{C^k(K)} \triangleq \sum_{|\bm \alpha| \le k} \Vert  D^{\bm \alpha} f\Vert _{L^{\infty}(K)},
\end{equation*}
where ${\bm \alpha} = (\alpha_1, \alpha_2,\cdots,\alpha_d) \in \mathbb{N}^d$, $|\bm \alpha| = \sum_{i = 1}^d \alpha_i$ and $D^{\bm \alpha} f = \frac{\partial^{|\bm \alpha|} f}{\partial x_1^{\alpha_1} \cdots \partial x_d^{\alpha_d}}.$

With these norms, we can define corresponding function spaces:
\begin{align*}L^p(K) &\triangleq \{\text{Domain}(f) = K : \Vert  f \Vert _{L^p(K)} < \infty  \} \\
L^\infty(K) &\triangleq \{\text{Domain}(f) = K : \Vert  f \Vert _{L^\infty(K)} < \infty  \} \\
C^k(K) &\triangleq \{\text{Domain}(f) = K : \Vert  f \Vert _{C^k(K)} < \infty  \} \end{align*}
when $K$ is compact,  $C^k(K) \subseteq L^\infty(K) \subseteq L^p(K)$, and an important fact is that $C^k(K)$ is dense in $L^p(K)$ under $L^p$ norm. Thus the universality over $C^k(K)$ in $C^k$-norm is stronger than the universality over $ L^\infty(K)$ in $ L^\infty$ norm, further stronger than the universality over $ L^p(K)$ in $ L^p$ norm.

\section{Proofs} \label{proof}
%In this section we offer necessary proofs. Readers are assumed to have basic knowledge on groups, manifolds, point set topology and functional analysis.
\subsection{Proofs for non-parametric case}
\subsubsection{Proofs for Theorem~\ref{decomposition1}}

\begin{defi}{Isotopies.}
 An isotopy between two diffeomorphisms $\phi_0, \phi_1 \in \text{Diff}_c^k\left(\mathbb{R}^d\right)$ is a $C^k$-map $H: [0,1]\times \mathbb{R}^d \to \mathbb{R}^d$ such that the mapping $h_t: \mathbb{R}^d \to \mathbb{R}^d$ defined by $h_t\left(x\right) = H\left(t,x\right)$ for all $t\in [0,1]$ satisfies $h_0 = \phi_0, h_1 = \phi_1$  and  $h_t \in \text{Diff}_c^k\left(\mathbb{R}^d\right)$ for all $t\in [0,1]$. It turns out that $t \to h_t$ is a continuous path in the group $\text{Diff}^k_c\left(\mathbb{R}^d\right)$ joining $\phi_0$ to $\phi_1$.  
%  If $\phi_0$ and $\phi_1$ both stay in some subgroup $G \subseteq \text{Diff}^k_c\left(\mathbb{R}^d\right)$, we would like to require $h_t \in G$ for all t.
\end{defi}
\begin{prop} (Proposition 1.2.1 in \cite{Banyaga1997The})
The group $\text{Diff}_c^k\left(\mathbb{R}^d\right)$ is connected. Moreover, the group $\text{Iso}^k\left(\mathbb{R}^d\right)$ of diffeomorphisms with compact supports which are isotopic to the identity map $I$ through isotopies coincide with $\text{Diff}_c^k\left(\mathbb{R}^d\right)$. Here the identity map I means $I\left(x\right) = x$ for all $x \in \mathbb{R}^d$.
\end{prop}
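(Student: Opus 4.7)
The plan is to prove the stronger second assertion --- that every $f \in \text{Diff}_c^k(\mathbb{R}^d)$ is isotopic to $I$ through an isotopy lying entirely in $\text{Diff}_c^k(\mathbb{R}^d)$ --- from which connectedness follows at once, since $t \mapsto h_t$ is then a continuous path from $f$ to $I$. The paper's definition requires the isotopy $H: [0, 1] \times \mathbb{R}^d \to \mathbb{R}^d$ to be jointly $C^k$, so the construction must produce manifestly jointly smooth homotopies; the naive Alexander rescaling $H(t, x) = t f(x/t)$, although slicewise $C^k$, fails to be jointly $C^1$ at $(0, 0)$ whenever $Df(0) \neq I$, so cannot be used directly.

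My approach combines two ingredients. First, a \emph{close-to-identity lemma}: if $g \in \text{Diff}_c^k(\mathbb{R}^d)$ satisfies $\|g - I\|_{C^1} < 1$, then $H_g(t, x) := (1-t) x + t g(x)$ is an isotopy. Indeed, $\partial_x H_g(t, x) = (1-t) I + t\, Dg(x) = I + t(Dg(x) - I)$ is within operator-norm distance $t < 1$ of $I$ and therefore invertible for every $t \in [0, 1]$; $H_g$ agrees with the identity outside $\mathrm{supp}(g - I)$; and $H_g$ is visibly jointly $C^k$ since it is polynomial in $t$ with $C^k$ coefficients in $x$. Second, a \emph{fragmentation lemma}: any $f \in \text{Diff}_c^k(\mathbb{R}^d)$ factors as $f = g_N \circ \cdots \circ g_1$ with each $g_i \in \text{Diff}_c^k(\mathbb{R}^d)$ and $\|g_i - I\|_{C^1} < 1$. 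Granting both, concatenating the straight-line isotopies $H_{g_i}$ over adjacent subintervals of $[0, 1]$, glued by smooth time cutoffs that are $k$-flat at each endpoint, yields a single jointly $C^k$ isotopy from $I$ to $f$.

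The main obstacle is the fragmentation step, which is the technical core of the proof. A standard strategy is to first use a partition of unity subordinate to a finite cover of $\mathrm{supp}(f - I)$ by small coordinate balls, writing $f$ as a composition of diffeomorphisms each supported in one such ball, and then to decompose each locally supported factor into close-to-identity pieces by flowing it toward $I$ along an interior vector field obtained from a slicewise-continuous auxiliary isotopy (for example, an Alexander-type rescaling applied inside the ball). A sufficiently fine time subdivision produces consecutive ratios with $C^1$-distance from $I$ smaller than $1$, qualifying them for the first lemma. Verifying that every intermediate composition remains in $\text{Diff}_c^k(\mathbb{R}^d)$ and that the $C^1$-norms can be driven below $1$ in finitely many steps (with a bound independent of $k$) is where essentially all the analytic work concentrates; once done, the smooth concatenation with $k$-flat cutoffs is routine and yields the required joint $C^k$ isotopy.
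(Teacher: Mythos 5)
Your first ingredient (the straight-line isotopy for $\Vert g-I\Vert_{C^1}<1$) and the gluing with $k$-flat time cutoffs are fine, but the entire weight of the argument rests on the fragmentation lemma, and the strategy you sketch for it does not close. First, ``writing $f$ as a composition of diffeomorphisms each supported in one ball of the cover'' is not a routine partition-of-unity manipulation for a bare diffeomorphism: the classical fragmentation lemma fragments an \emph{isotopy} (one cuts off the time-dependent vector field generating it) and thereby factors a diffeomorphism that is \emph{already known} to be isotopic to the identity. Since that is exactly the statement to be proved, this step is circular as described. Second, the device you propose for breaking a ball-supported factor into $C^1$-near-identity pieces---finely subdividing an Alexander-type rescaling in time---cannot work, because $h_t(x)=t\,g(x/t)$ is conjugate to $g$ by the dilation $x\mapsto tx$: one has $Dh_t(x)=Dg(x/t)$, so $\sup_x\Vert Dh_t(x)-I\Vert$ is independent of $t$ and the path $t\mapsto h_t$ is not $C^1$-continuous at $t=0$. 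A fine subdivision of $[\epsilon,1]$ does produce near-identity ratios, but it always leaves the final factor $h_\epsilon$, which is never $C^1$-small however small $\epsilon$ is; hence no finite subdivision yields the claimed factorization, and the ``essentially all the analytic work'' you defer is precisely the unproved core. Note also that in the development you are implicitly following, the logical order is the reverse of yours: the near-identity factorization (Lemma 2.1.8 of Banyaga, quoted as Lem.~\ref{neariddecomposition}) is \emph{deduced from} Proposition 1.2.1 by subdividing an isotopy, so your plan must supply an independent proof of fragmentation, which it does not.

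For comparison, the paper itself does not reprove this proposition (it cites Banyaga); its appendix proof of the parametric analogue shows the intended mechanism: use the rescaling $h_t(x)=t f(x/t)$ only on $t\in[\epsilon,1]$, where it is a legitimate compactly supported $C^k$-isotopy; observe that $h_\epsilon$ is $C^0$-close to the identity (the rescaling does contract the $C^0$-distance, by the factor $\epsilon$, even though it leaves the $C^1$-distance unchanged); extend to a compactly supported $C^0$-isotopy reaching the identity; and then invoke a smoothing/density result to replace this by a $C^k$-isotopy. The missing ingredient in your proposal is exactly a tool of this kind for the regime near the identity endpoint (smoothing of topological isotopies, or an honest fragmentation argument in the style of Banyaga, Mather, or Palis--Smale); without it, the proposal is an outline of the easy half of the proof rather than a proof.
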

\begin{defi}{$\left(\delta,k\right)$-near-identity $C^k$-diffeomorphisms.}\label{near-id}
Let $B_{\delta,k}$ be the $C^k$-norm ball with radius $\delta$ and centered at identity map $I\left(\bm{x}\right) = \bm{x}$, that's to say, $B_{\delta,k} = \{ f \in \text{Diff}_c^k\left(\mathbb{R}^d\right) : \sup_{|\alpha| \le k}\Vert D^{|\alpha|} \left(f - I\right) \Vert _{L^{\infty}} < \delta \}$. 
A diffeomorphism $\phi \in \text{Diff}_c^k\left(\mathbb{R}^d\right)$ is said to be $\left(\delta,k\right)$-near-identity, if $\phi \in B_{\delta,k}$.
\end{defi}
\begin{lemma} (Lemma 2.1.8 in \cite{Banyaga1997The})
 \label{neariddecomposition}
For any diffeomorphism $f \in \text{Diff}_c^k\left(\mathbb{R}^d\right)$ and any $\delta>0$ , there exists a finite sequence of $\left(\delta,k\right)$-near-identity diffeomorphisms $g_1,\cdots,g_s$ such that $f = g_s\circ g_{s-1}\circ \cdots \circ g_1$.
\end{lemma}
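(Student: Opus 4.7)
The plan is to combine the isotopy statement of the preceding proposition with a telescoping argument on a fine enough mesh of $[0,1]$. Given $f\in\text{Diff}_c^k(\mathbb{R}^d)$, I would fix an isotopy $H:[0,1]\times\mathbb{R}^d\to\mathbb{R}^d$ with $h_0=I$ and $h_1=f$, and note that we may assume a common compact $K$ such that $h_t(x)=x$ for every $t\in[0,1]$ and every $x\notin K$ (an isotopy through $\text{Diff}_c^k$ has compactly supported trace in the $x$-variable uniformly in $t$; otherwise one cuts off in $t$ and concatenates). For a partition $0=t_0<t_1<\cdots<t_s=1$, set
$$g_i := h_{t_i}\circ h_{t_{i-1}}^{-1}, \qquad 1\le i\le s.$$
A one-line telescoping gives $g_s\circ\cdots\circ g_1 = h_{t_s}\circ h_{t_0}^{-1}=f$, and each $g_i$ is a compactly supported $C^k$-diffeomorphism. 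All that remains is to show that the mesh can be refined so that every $g_i$ lies in $B_{\delta,k}$.

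For the $C^0$ bound I would substitute $y=h_{t_{i-1}}^{-1}(x)$ and write
$$g_i(x)-x = h_{t_i}(y)-h_{t_{i-1}}(y) = \int_{t_{i-1}}^{t_i}\partial_t H(t,y)\,dt,$$
so $\|g_i-I\|_{L^\infty}\le(t_i-t_{i-1})\cdot\sup_{[0,1]\times K}|\partial_t H|$. For higher orders, Faà di Bruno expresses $D^\alpha g_i(x)$ with $|\alpha|\le k$ as a polynomial in $\{D^\beta h_{t_i}(y),\, D^\gamma h_{t_{i-1}}^{-1}(x)\}_{|\beta|,|\gamma|\le|\alpha|}$ which evaluates to $D^\alpha I$ whenever $t_i=t_{i-1}$. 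Joint $C^k$-continuity of $H$ in $(t,x)$ on the compact cylinder $[0,1]\times K$ yields uniform continuity of each $D^\beta H(t,\cdot)$ in $t$, so together with a uniform $C^k$-bound on $\{h_t^{-1}\}_{t\in[0,1]}$ one obtains $\|D^\alpha g_i-D^\alpha I\|_{L^\infty}\to 0$ uniformly in $i$ as $\max_i(t_i-t_{i-1})\to 0$. Picking the mesh small enough then gives $\|g_i-I\|_{C^k}<\delta$ for every $i$, which is the desired conclusion.

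The main obstacle is the uniform $C^k$-control of the inverses $\{h_t^{-1}\}_{t\in[0,1]}$, since the Faà di Bruno estimate above only gives something useful once one knows that $D^\gamma h_{t_{i-1}}^{-1}$ is bounded independently of $i$. I would obtain this by differentiating the identity $h_t\circ h_t^{-1}=I$ and solving recursively for $D^\beta h_t^{-1}$ in terms of $\{D^\gamma h_t\circ h_t^{-1}\}_{|\gamma|\le|\beta|}$ (equivalently, applying the implicit function theorem with $t$ as a parameter); the resulting rational expressions are jointly continuous in $t$ on the compact cylinder, so compactness of $[0,1]$ delivers the desired uniform bound. Packaging everything finally provides an explicit estimate on $s$ of the form $s\le\lceil C_{k,\delta}\,\ell\rceil$, where $\ell$ is roughly the $C^1$-length of the chosen path from $I$ to $f$, matching the bound alluded to in the remark following Theorem~\ref{decomposition1}.
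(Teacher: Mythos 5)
Your proposal follows essentially the same route as the paper's own proof: telescope an isotopy $h_t$ from $I$ to $f$ over a fine partition, set $g_i = h_{t_i}\circ h_{t_{i-1}}^{-1}$, and take the mesh small enough that each factor is $(\delta,k)$-near-identity. The only difference is that you spell out the uniform-continuity and inverse-control details that the paper leaves implicit in its ``take $s$ large enough'' step, and these details are correct.
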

\begin{proof}
Note that there exists an isotopy $h_t$ from $I$ to $f$ such that $h_0 = I$ and $h_1 = f$. We rewrite $f = h_1 = \left(h_1\circ h^{-1}_{\left(s-1\right)/s}\right) \circ \left(h_{\left(s-1\right)/s}\circ h_{\left(s-2\right)/s}^{-1}\right) \circ \cdots \circ \left(h_{1/s}\circ h_0^{-1}\right) $ and let $g_i = h_{i/s}\circ h_{\left(i-1\right)/s}^{-1}$, we can see that $f = g_s\circ g_{s-1}\circ \cdots \circ g_1$. Take $s$ large enough, we can make $h_{i/s}$ and $h_{\left(i-1\right)/s}$ close enough such that  $h_{i/s}\circ h_{\left(i-1\right)/s}^{-1}$ is $\left(\delta,k\right)$-near-identity.
\end{proof}
\begin{theorem}\label{firstdecomposition}
There exists a $\delta > 0$, such that for any $f\in \text{Diff}_c^1\left(\mathbb{R}^d\right)$ that is $\left(\delta,1\right)$-near-identity, f can be written as $g \circ h$ with $h\left(\bm{x},y\right) = \left(\bm{x}, \tilde h\left(\bm{x},y\right)\right)$ and $g\left(\bm{x},y\right) = \left(\tilde g\left(\bm{x},y\right),y\right)$ for $\bm{x} \in \mathbb{R}^{d-1}, y \in \mathbb{R}$. If $f \in \text{Diff}_c^k\left(\mathbb{R}^d\right)$, so are $g$ and $h$. Further more, $g$ satisfies $\left(\tilde\delta,1\right)$-near-identity for $\tilde \delta = \frac{\delta}{1-\delta} > 0$.
\end{theorem}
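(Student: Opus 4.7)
The plan is to read off $h$ and $g$ from the components of $f$. Writing $f(\bm{x},y)=(f_1(\bm{x},y),f_2(\bm{x},y))$, the prescribed forms $h(\bm{x},y)=(\bm{x},\tilde h(\bm{x},y))$ and $g(\bm{x},z)=(\tilde g(\bm{x},z),z)$ together with $f=g\circ h$ force
\begin{equation*}
\tilde h(\bm{x},y)=f_2(\bm{x},y),\qquad \tilde g(\bm{x},z)=f_1\bigl(\bm{x},\phi(\bm{x},z)\bigr),
\end{equation*}
where $\phi(\bm{x},\cdot)$ is the inverse of $\tilde h(\bm{x},\cdot)$ in the last variable. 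The whole proof therefore reduces to verifying that these formulas define genuine, compactly supported, $C^k$ diffeomorphisms and that $g$ is $(\tilde\delta,1)$-near-identity.

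First I would choose $\delta<1/2$, so that the $(\delta,1)$-near-identity hypothesis gives $\partial_y f_2>1-\delta>0$ pointwise. Then $\tilde h(\bm{x},\cdot)$ is a monotone $C^k$-diffeomorphism of $\mathbb{R}$ for each fixed $\bm{x}$, equal to the identity outside a compact set, so $\phi$ is globally defined. The implicit function theorem gives $\phi\in C^k$ with $\phi-\mathrm{id}$ compactly supported, which immediately yields $h\in \text{Diff}_c^k(\mathbb{R}^d)$.

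The core work is the Jacobian estimate for $\tilde g$. Differentiating $f_2(\bm{x},\phi(\bm{x},z))=z$ in $\bm{x}$ gives $\partial_{\bm{x}}\phi=-(\partial_y f_2)^{-1}\partial_{\bm{x}} f_2$, and the chain rule then yields
\begin{equation*}
\partial_{\bm{x}}\tilde g=\partial_{\bm{x}} f_1-(\partial_y f_1)(\partial_y f_2)^{-1}(\partial_{\bm{x}} f_2).
\end{equation*}
The $(\delta,1)$-near-identity hypothesis bounds the entries of $\partial_{\bm{x}} f_1-I_{d-1}$, $\partial_y f_1$ and $\partial_{\bm{x}} f_2$ by $\delta$, while $(\partial_y f_2)^{-1}<(1-\delta)^{-1}$, so a triangle-inequality estimate gives $\|\partial_{\bm{x}}\tilde g-I_{d-1}\|<\delta+\delta^2/(1-\delta)=\delta/(1-\delta)=\tilde\delta$. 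An analogous computation yields $|\partial_z\tilde g|<\tilde\delta$, and trivially $|\tilde g(\bm{x},z)-\bm{x}|=|f_1\bigl(\bm{x},\phi(\bm{x},z)\bigr)-\bm{x}|<\delta<\tilde\delta$. Thus $g$ is $(\tilde\delta,1)$-near-identity and, since $\tilde\delta<1$, a valid diffeomorphism. The factorisation $f=g\circ h$ is tautological, and compact support of $g$ follows because outside the support of $f-\mathrm{id}$ one has $\phi(\bm{x},z)=z$ and $f_1(\bm{x},z)=\bm{x}$.

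The main obstacle is squeezing the Jacobian bound to land exactly on $\tilde\delta=\delta/(1-\delta)$. The factor $(1-\delta)^{-1}$ is inescapable because $\partial_y f_2$ can be as small as $1-\delta$, and tracking this carefully, rather than settling for a looser $O(\delta)$ bound, is what produces the explicit form of $\tilde\delta$ appearing in the theorem.
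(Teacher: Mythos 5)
Your proposal is correct and follows essentially the same route as the paper: take $\tilde h=f_2$ (the paper's $f_{\bm x}(y)$), set $g=f\circ h^{-1}$ so $\tilde g(\bm x,z)=f_1(\bm x,\phi(\bm x,z))$, and use the chain rule with $\partial_y f_2>1-\delta$ to get the componentwise bound $\delta+\delta\cdot\frac{\delta}{1-\delta}=\frac{\delta}{1-\delta}$. The only differences are cosmetic (matrix versus componentwise notation, and your explicit treatment of the zeroth-order and $\partial_z$ bounds, which the paper leaves implicit).
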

\begin{proof}
Let $\pi_i: \mathbb{R}^d \to \mathbb{R}$ denote the projection onto the $i^{th}$ coordinate. Suppose $f: \mathbb{R}^d \to \mathbb{R}^d$ is compactly supported and sufficiently $C^k$-close to the identity. Then for any point $\left(\bm{x},y\right) = \left(x_1,\cdots,x_{d-1}, y\right)$, the map $f_x: \mathbb{R} \to \mathbb{R}$ given by $f_{\bm{x}}\left(y\right) = \pi_n f\left(\bm{x},y\right)$ is a diffeomorphism: surjectivity follows from the fact that $f$ has compact support, which means $\lim_{y\to\pm\infty} f_{\bm{x}}\left(y\right) = \pm\infty$ and by the continuity of $f_{\bm{x}}$; injectivity follows from the fact that if $f_{\bm{x}}\left(y_1\right) = f_{\bm{x}}\left(y_2\right)$ for some $y_1 \neq y_2$, then $f_{\bm{x}}$ must has zero derivatives at some point $y \in \left(y_1, y_2\right)$, but the derivative of $f_{\bm{x}}$ respect to $y$ is near 1, a contradiction.

Now given $f$, define $h$ and $g: \mathbb{R}^{d-1}\times \mathbb{R} \to \mathbb{R}^{d-1}\times \mathbb{R}$ by
\begin{align*}
    h\left(\bm{x},y\right) &= \left(\bm{x}, f_{\bm{x}}\left(y\right)\right), \text{ and}\\
    g\left(\bm{x},y\right) &= \left(g_1\left(\bm{x},y\right), g_2\left(\bm{x},y\right) ,\cdots,g_{d-1}\left(\bm{x},y\right), y\right),
\end{align*}
where $g_i\left(\bm{x},y\right)  = \pi_i \left(f\left(\bm{x},f_{\bm{x}}^{-1}\left(y\right)\right)\right) \in \mathbb{R}.$ Obviously $g, h \in \text{Diff}_c^k\left(\mathbb{R}^d\right)$ given $f \in \text{Diff}_c^k\left(\mathbb{R}^d\right)$ and $f = g\circ h$. Also we observe that, $f$ is $\left(\delta,1\right)$-near-identity, thus
$$ \sup_{\bm{x},y} |\frac{\partial}{\partial y}f_{\bm{x}}\left(y\right) - 1| < \delta ,~\sup_{\bm{x},y} |\frac{\partial}{\partial x_i}f_{\bm{x}}\left(y\right)| < \delta,$$
$$ \sup_{\bm{x},y} |\frac{\partial}{\partial y} f_{\bm{x}}^{-1} \left(y\right) | = \sup_{\bm{x},y} |\frac{\partial}{\partial y}f_{\bm{x}}\left(y\right)|^{-1} < \frac{1}{1-\delta}. $$

Then we have
 \begin{align*}
  0=   \frac{d}{d x_i} y = \frac{d}{d x_i} \left(f_{\bm{x}}^{-1} \left( f_{\bm{x}} \left(y\right)\right)\right) 
= (\frac{\partial}{\partial x_i}f_{\bm{x}}^{-1}) \left( f_{\bm{x}} \left(y\right)\right) + (\frac{\partial}{\partial y}f_{\bm{x}}^{-1})\left( f_{\bm{x}} \left(y\right)\right)  \cdot \left(\frac{\partial}{\partial x_i} f_{\bm{x}}\left(y\right)\right),
\end{align*}
and further
\begin{align*}|\frac{\partial}{\partial x_i}f_{\bm{x}}^{-1}\left(y\right)| =  |\left(\frac{\partial}{\partial y}f_{\bm{x}}^{-1}\right)\left(y\right) \cdot \left(\left(\frac{\partial}{\partial x_i} f_{\bm{x}}\right)\left(f_{\bm{x}}^{-1}\left(y\right)\right)\right) | 
\le  \sup |\frac{\partial}{\partial y}f_{\bm{x}}^{-1}|\cdot \sup |\frac{\partial}{\partial x_i}f_{\bm{x}}| <  \frac{\delta}{1-\delta}.\end{align*}
If we denote $f_j\left(\bm{x},y\right) = \pi_j f\left( \bm{x},y\right)$, we get
\begin{align*}& |\frac{d}{d x_i} g_j\left(\bm{x},y\right) - \delta_{i,j}|  
= |\frac{\partial}{\partial x_i} f_j\left(\bm{x},f_{\bm{x}}^{-1}\left(y\right)\right) - \delta_{i,j} + \frac{\partial}{\partial y}\left(f_j\left(\bm{x},f_{\bm{x}}^{-1}\left(y\right)\right)\right) \cdot \frac{\partial}{\partial x_i}f_{\bm{x}}^{-1}\left(y\right) |
< \delta  + \delta\cdot\frac{\delta}{1-\delta} = \frac{\delta}{1-\delta},
\end{align*}
which proved that $g$ is $\left(\frac{\delta}{1-\delta},1\right)$-near-identity.
Here $\delta_{i,j}$ are the kronecker symbols and we notice $|\frac{\partial}{\partial x_i} f_j - \delta_{i,j} | < \delta$.
\end{proof}

\begin{cor}\label{totaldecomposition}
There is a $0 < \delta < \frac{1}{d-1} $ such that for any $f\in \text{Diff}_c^k\left(\mathbb{R}^d\right)$ is $\left(\delta,1\right)$-near-identity, f can be written as $f_1 \circ f_2 \circ \cdots f_n$, $f_i \in S^{1,d}_c$ with $f_i\left(\bm{x}\right) = \left(x_1, x_2 ,\cdots, x_{i-1},\tilde f_i\left(\bm{x}\right),x_{i+1},\cdots, x_d\right) $ for $\bm{x} = \left(x_1,\cdots,x_d\right) \in \mathbb{R}^d$. If $f$ is in $\text{Diff}_c^k\left(\mathbb{R}^d\right)$ for some $k > 1$, then so are $f_i$. 
\end{cor}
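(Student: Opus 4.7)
The plan is to iterate Theorem~\ref{firstdecomposition} $d-1$ times, peeling off one coordinate per application. A first invocation of the theorem on $f$ yields a factorization $f = g^{(1)}\circ h^{(1)}$ in which $h^{(1)}$ alters only the $d$-th coordinate, hence lies in $S_c^{k,d}$, while $g^{(1)}$ fixes the $d$-th coordinate and is $(\delta/(1-\delta),1)$-near-identity. Conjugating $g^{(1)}$ by the coordinate-swap permutation that exchanges the $(d-1)$-th and $d$-th coordinates and applying Theorem~\ref{firstdecomposition} again extracts a single-coordinate transform altering only the $(d-1)$-th coordinate, leaving a residual $g^{(2)}$ that fixes both the $(d-1)$-th and $d$-th coordinates. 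Iterating produces
\[
f = f_1 \circ f_2 \circ \cdots \circ f_d,
\]
where $f_i$ alters only the $i$-th coordinate; the final residual $f_1$ automatically lies in $S_c^{k,d}$ because fixing all but one coordinate forces the single-coordinate form, and the non-vanishing Jacobian of the diffeomorphism forces monotonicity in $x_1$.

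The key quantitative check is that every invocation of Theorem~\ref{firstdecomposition} is valid. Writing $\delta_0 = \delta$ and $\delta_{k+1} = \delta_k/(1-\delta_k)$ for the near-identity parameter after $k$ applications, the recurrence rearranges to $1/\delta_{k+1} = 1/\delta_k - 1$, giving the closed form $\delta_k = \delta/(1-k\delta)$. Applying the theorem $d-1$ times requires $\delta_k < 1$ for $k = 0, 1, \ldots, d-2$, which reduces to $(d-1)\delta < 1$ --- exactly the hypothesis $\delta < 1/(d-1)$. The $C^k$ clause of Theorem~\ref{firstdecomposition} propagates smoothness through the induction at no extra cost, and compact support is likewise preserved at every step.

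The main obstacle is the mild bookkeeping issue that Theorem~\ref{firstdecomposition} is stated only for splitting the final coordinate, whereas successive iterations must split different coordinates. The cleanest workaround is the conjugation-by-permutation trick above: coordinate permutations preserve compact support and the entry-wise $C^1$-norm defining $(\delta,1)$-near-identity, and they convert a map fixing the $d$-th coordinate into one fixing a permuted coordinate while preserving any previously fixed coordinates. One must also verify that the residual $g^{(k)}$ produced by Theorem~\ref{firstdecomposition} indeed fixes all the coordinates accumulated in earlier steps, which follows directly from the explicit formula $g(\bm{x},y) = (\tilde g(\bm{x},y), y)$: any coordinate already fixed by the input $f$ appears unchanged in the output component of $g$. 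With this in place, the induction runs without further difficulty.
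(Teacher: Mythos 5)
Your proposal is correct and follows essentially the same route as the paper: iterate Theorem~\ref{firstdecomposition}, observe that the residual $g$ preserves any coordinates already fixed, and track the near-identity parameter through the recursion $\delta\mapsto\delta/(1-\delta)$, whose closed form yields exactly the threshold $\delta<\tfrac{1}{d-1}$. Your explicit permutation-conjugation bookkeeping and the formula $\delta_k=\delta/(1-k\delta)$ simply spell out details the paper leaves implicit.
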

\begin{proof}
By taking $\delta$ small enough, we can make $\tilde \delta = \delta / \left(1-\delta\right)$ small enough, thus the $g$ in Thm.~\ref{firstdecomposition} can be further decomposed. A simple observation shows that if $f$ already preserves some coordinates, then so does $g$.  If we set $\delta_i = \frac{\delta_{i-1}}{1 - \delta_{i-1}}$ and $\delta_1 = \delta < \frac{1}{n-1}$, by noticing $\frac{1}{\delta_{n-1}} = \frac{1}{\delta_1} - n+2 > 1$, we can make $\delta_{n-1}$ small enough, thus the decomposition can always be continued until all the coordinates have been decomposed.
\end{proof}

\begin{proof}[Proof of Thm.~\ref{decomposition1}]

It can be immediately proved by Lem.~\ref{neariddecomposition}  and Cor.~\ref{totaldecomposition} .
\end{proof}

\subsubsection{Proof for Theorem~\ref{compos-approx}}\label{proof-compos}
\begin{proof}

Take any positive number $1 > \tilde\epsilon > 0$ and compact set $K\in \mathbb{R}^d$. Put $r \triangleq \max_{\bm{x} \in K} \Vert f_1\left(\bm{x}\right)\Vert $ and $K' \triangleq \{ \bm{x}\in\mathbb{R}^d: \Vert \bm{x}\Vert  \le r+1\}.$ Let $g_2 \in G$ satisfying
$$ \Vert f_2 - g_2\Vert _{C^k\left(K'\right)} < \tilde\epsilon.$$
Since any continuous map is uniformly continuous on a compact set, we take a positive number $\delta > 0$ such that for any $\bm{x},\bm{y}\in K'$ with $|\bm{x} - \bm{y}| < \delta$,
$$\sup_{|\alpha| \le k} \Vert D^\alpha f_2\left(\bm{x}\right) - D^\alpha f_2\left(\bm{y}\right)\Vert  < \tilde\epsilon.$$
From the assumption, we can take $g_1 \in G$ satisfying 
$$\Vert f_1 - g_1\Vert _{C^k\left(K\right)} < \min\{1,\delta\}.$$
Then it is clear that $f_1\left(K\right) \subseteq K'$ by the definition of $K'$. Moreover, we have $g_1\left(K\right) \subseteq K'$. In fact, we have
$$\Vert g_1\left(\bm{y}\right)\Vert  \le \sup_{\bm{x} \in K} \Vert  f_1\left(\bm{x}\right) - g_1\left(\bm{x}\right)\Vert  + |F_2 \left(\bm{y}\right)| \le 1+r
$$
for any $\bm{y} \in K'$.

Then for any $\bm{x} \in K$, we have
\begin{align*}&\Vert f_2 \circ f_1 - g_2 \circ g_1\Vert \\ \le &\Vert f_2\circ f_1 - f_2\circ g_1\Vert  + \Vert f_2\circ g_1 - g_2\circ g_1\Vert  < 2 \tilde \epsilon. \end{align*}
Now let's consider the cumulative error for derivatives. We have 
\begin{align*}
    &
    \Vert D \left(f_2 \circ f_1\right) - D\left(g_2\circ g_1\right)\Vert\\
    \le& \Vert D \left(f_2 \circ f_1\right)  - D\left(f_2\circ g_1\right)\Vert  + 
    \Vert D \left(f_2 \circ g_1\right)  - D\left(g_2\circ g_1\right)\Vert \\
    =& \Vert \left(D f_2\right) \circ f_1\cdot D f_1 - \left(D f_2\right) \circ g_1 \cdot D g_1\Vert  +  \Vert \left(D f_2\right) \circ g_1 \cdot D g_1 - \left(D g_2\right) \circ g_1 \cdot D g_1\Vert  \\
    \le& \Vert \left(D f_2\right) \circ f_1 \cdot D \left(f_1- g_1\right)\Vert 
    + \Vert \left(D\left(f_2-g_2\right)\right)\circ g_1  \cdot Dg_1\Vert  +  \Vert \left(\left(Df_2\right)\circ f_1 -  \left(D f_2\right) \circ g_1\right) \cdot D g_1\Vert  \\
    <& \left(\Vert Df_2\Vert  + 2\Vert Dg_1\Vert \right) \tilde \epsilon \\
    <& C\left(f_1, f_2\right) \tilde \epsilon
\end{align*}
by noticing that
\[
    \Vert Dg_1\left(\bm{x}\right)\Vert  \le \Vert Df_1\left(\bm{x}\right)\Vert  + \tilde\epsilon \le \Vert Df_1\left(\bm{x}\right)\Vert  + 1.
\]

Higher order derivatives can be estimated following the same procedure with more complex computations and reusing of triangular inequality. We can finally arrive at
$$\Vert f_2\circ f_1\left(\bm{x}\right) - g_2 \circ g_1\left(\bm{x}\right)\Vert _{W^{\infty,k}\left(K\right)} < \tilde C\left(f_1, f_2\right) \tilde \epsilon$$
with $\tilde C\left(f_1, f_2\right)$ only depends on $f_1$ and $f_2$ and their derivatives, doesn't depend on $\tilde \epsilon$ because $f_1, f_2$ are compactly supported, which means they have finite high order derivatives over $\mathbb{R}^d$.

Thus we take $\tilde \epsilon = \frac{\epsilon}{\tilde C\left(f_1,f_2\right)}$, then $\Vert f_2\circ f_1\left(\bm{x}\right) - g_2 \circ g_1\left(\bm{x}\right)\Vert _{W^{\infty,k}\left(K\right)} \le \epsilon$, and then finished our proof.
\end{proof}
\subsubsection{Proof for Theorem~\ref{single-approx} }\label{single-approx-chapter}
\begin{proof}
 Note that
 \begin{equation*}
    \iota_d \circ \tau \circ \pi_d\left(x_1,x_2,\cdots,x_d,0\right) = \left(x_1,x_2,\cdots,\tau_d\left(\bm{x}\right),0\right).
 \end{equation*}
 This can be decomposed into three small steps: 
\begin{align*}
    \left(x_1,x_2,\cdots,x_{d},0\right) \overset{\phi_1}{\to} 
    \left(x_1,x_2,\cdots,x_{d},\tau_d\left(\bm{x}\right)\right)\overset{\phi_2}{\to}
    \left(x_1,x_2,\cdots, \tau_d\left(\bm{x}\right), x_{d}\right) \overset{\phi_3}{\to} \left(x_1,x_2,\cdots, \tau_d\left(\bm{x}\right), 0\right).
\end{align*}
Next let us approximate $\phi_1,\phi_2,\phi_3$ using the elements in $\mathcal{G}\text{-INN}_{d+1}$. By definition, $\phi_1$ can be written as $\Phi_{d+1,d,\sigma,t}$ with $\sigma$ to be any function, $t\left(\bm{x}\right) = \tau_d\left(\bm{x}\right)$. By assumption, $\mathcal{H}$ has $C^k$-universality for $t$, thus we know $\mathcal{G}\text{-INN}_{d+1}$ has universality for $\phi_1$. $\phi_2$ is just a permutation which is already in our layers. $\phi_3$ can be written as
$\Phi_{d+1,d,\sigma,t}$ with $\sigma=0$,  $t\left(\bm{x}\right) = \tau_d^{-1}\left(\bm{x}\right)$. Here $\tau_d^{-1}\left(\bm{x}\right)$ is the inverse of $\tau_d\left(\bm{x}\right)$ w.r.t. $x_d$ because $\tau_d\left(\bm{x}\right)$ is a monotonic function w.r.t. $x_d$. Thus, we claim that $\mathcal{G}\text{-INN}_{d+1}$ has universality for $\phi_3$. By Thm.~\ref{compos-approx}, we know that $\iota_d \circ \tau \circ \pi_d$ can be arbitrarily approximated by $ \mathcal{G}\text{-INN}_{d+1}$.

Thus, for any $\epsilon > 0$, there exists a $\tilde \tau \in \mathcal{G}\text{-INN}_{d+1}$ such that
\[
    \Vert \iota_d\circ\tau\circ\pi_d - \tilde \tau\Vert _{C^{k}\left(K\times\mathbb{R}\right)} < \epsilon,
\]
and furthermore,
\begin{align*}
    &\phantom{{}={}}
    \Vert \tau - \pi_d\circ\tilde \tau\circ\iota_d\Vert _{C^{k}\left(K\right)}=\Vert \left(\pi_d\circ\iota_d\right)\circ\tau\circ\left(\pi_d\circ\iota_d\right) - \pi_d\circ\tilde \tau\circ\iota_d\Vert _{C^{k}\left(K\right)}
    \le \Vert \pi_d\Vert \cdot \Vert \iota_d\circ\tau\circ\pi_d - \tilde \tau\Vert _{C^{k}\left(K\times\mathbb{R}\right)} \cdot  \Vert \iota_d\Vert < \epsilon.
\end{align*}
\end{proof}

\subsection{Proofs for parametric case}
\subsubsection{Proofs for Theorem~\ref{decompositionpara}}

\begin{defi}{Isotopies along $\text{Diff}_c^{k,m,d}$.}
 An isotopy between two diffeomorphisms $\phi_0, \phi_1 \in \text{Diff}_c^{k,m,d}$ is a $C^k$-map $H: [0,1]\times \mathbb{R}^{m+d} \to \mathbb{R}^{m+d}$ such that the mapping $h_t: \mathbb{R}^d \to \mathbb{R}^d$ defined by $h_t\left(x\right) = H\left(t,x\right)$ for all $t\in [0,1]$, $h_0 = \phi_0, h_1 = \phi_1$  and  $h_t \in \text{Diff}_c^{k,m,d}$ for all $t\in [0,1]$. It turns out that $t \to h_t$ is a continuous path in the group $\text{Diff}_c^{k,m,d}$ joining $\phi_0$ to $\phi_1$. 
\end{defi}

\begin{theorem}
The group $\text{Diff}_c^{k,m,d}$ is connected. Moreover, the group of diffeomorphisms with compact supports which are isotopic to the identity map $I$ through compactly supported isotopies  coincide with $\text{Diff}_c^{k,m,d}$. Here the identity map $I$ means $I\left(x\right) = x$ for all $x \in \mathbb{R}^{m+d}$.
\end{theorem}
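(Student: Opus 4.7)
The argument adapts the non-parametric Proposition 1.2.1 of Banyaga to the parametric setting, the additional care being that every diffeomorphism and every isotopy preserves the first $m$ coordinates. I plan to establish the ``moreover'' statement first---that every $F \in \text{Diff}_c^{k,m,d}$ is isotopic to the identity via a compactly supported isotopy in $\text{Diff}_c^{k,m,d}$---from which connectedness follows immediately and the reverse inclusion in the coincidence claim is tautological.

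Given $F(\bm{y},\bm{x}) = (\bm{y}, f(\bm{y},\bm{x})) \in \text{Diff}_c^{k,m,d}$, the plan is a three-step construction. First, I would localize the support of $F - I$ off of the slice $\{\bm{x} = 0\}$: since $F$ is compactly supported, pick $R > 0$ with $F \equiv I$ outside $\{|(\bm{y},\bm{x})| \le R\}$, fix $\bm{v} \in \mathbb{R}^d$ with $|\bm{v}| > 2R$, and use the conjugation isotopy
\[
F^{(1)}_s(\bm{y},\bm{x}) = \bigl(\bm{y},\ f(\bm{y}, \bm{x}-s\bm{v}) + s\bm{v}\bigr), \quad s \in [0,1],
\]
which lies in $\text{Diff}_c^{k,m,d}$ throughout (each $F^{(1)}_s$ is the conjugation of $F$ by a pure-$\bm{x}$ translation, hence preserves $\bm{y}$, is $C^k$ in $s$, and has compact support). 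The endpoint $G \triangleq F^{(1)}_1$ equals $I$ on a $\bm{y}$-uniform neighborhood of $\{\bm{x} = 0\}$.

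Second, I would contract $G$ to $I$ through $\text{Diff}_c^{k,m,d}$ via the parametric analogue of Banyaga's contraction argument: fragment $G = G_1 \circ \cdots \circ G_N$ into factors $G_i(\bm{y},\bm{x}) = (\bm{y},g_i(\bm{y},\bm{x})) \in \text{Diff}_c^{k,m,d}$ each supported in a small $\bm{x}$-region and $C^1$-close to the identity, using a partition of unity that depends only on $\bm{x}$ so that each $G_i$ still acts trivially on $\bm{y}$. For each near-identity $G_i$, the convex-combination isotopy
\[
G_i^{(t)}(\bm{y},\bm{x}) = \bigl(\bm{y},\ (1-t)\bm{x} + t\, g_i(\bm{y},\bm{x})\bigr), \quad t \in [0,1],
\]
is a genuine $C^k$-isotopy in $\text{Diff}_c^{k,m,d}$: it is $C^k$ in all variables, preserves $\bm{y}$, has compact support, and is a diffeomorphism because $D_{\bm{x}} G_i^{(t)} = (1-t)I + t\,D_{\bm{x}} g_i$ is invertible whenever $G_i$ is sufficiently $C^1$-close to $I$. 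Third, I would concatenate the $N$ convex-combination isotopies with the reverse of the Step 1 isotopy, smoothing each junction by a bump reparametrization to preserve $C^k$-regularity in the time parameter, obtaining the required isotopy from $I$ to $F$ in $\text{Diff}_c^{k,m,d}$.

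The main technical obstacle is the fragmentation in Step 2: decomposing $G$ as a finite composition of near-identity factors lying in $\text{Diff}_c^{k,m,d}$. This is the parametric refinement of the standard fragmentation lemma for diffeomorphism groups. The essential point is that the covering and the partition of unity can be chosen in $\mathbb{R}^d$ alone (the $\bm{x}$-slice), so every fragment preserves $\bm{y}$ and lies in $\text{Diff}_c^{k,m,d}$; compactness of $\text{supp}(G)$ in $(\bm{y},\bm{x})$ then yields the finiteness of $N$. Without this fragmentation, the naive parametric Alexander trick $H_t(\bm{y},\bm{x}) = (\bm{y}, t\,g(\bm{y},\bm{x}/t))$ only produces a $C^0$-isotopy, whose spatial derivatives fail to extend continuously to $t=0$ along rays $\bm{x}/t = \text{const}$, and is therefore insufficient for the definition of isotopy used in the paper.
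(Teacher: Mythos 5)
Your Steps 1 and 3 are unobjectionable (the translation conjugation is a legitimate isotopy in $\text{Diff}_c^{k,m,d}$, though it does no real work, and the convex-combination isotopy is indeed a $C^k$-isotopy through $\bm{y}$-preserving, compactly supported diffeomorphisms once the factor is $C^1$-close to the identity), but Step 2 hides a genuine gap that amounts to circularity. You need to write an arbitrary $G\in\text{Diff}_c^{k,m,d}$ as a finite composition of factors that are $C^1$-near the identity; the ``standard fragmentation lemma'' does not deliver this. Fragmentation controls the \emph{supports} of the factors (each lies in a member of a cover), not their $C^1$-distance to the identity --- a diffeomorphism supported in an arbitrarily small ball can have arbitrarily large derivatives --- and, more importantly, its proof proceeds by cutting an isotopy from $I$ to $G$ into small pieces in time and space, so it presupposes exactly the isotopy you are trying to build. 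In this paper's logical order, the decomposition of an element of $\text{Diff}_c^{k,m,d}$ into $(\delta,k)$-near-identity factors is Thm.~\ref{neariddecompositionpara} (the parametric version of Banyaga's Lemma 2.1.8), whose proof is precisely ``take an isotopy $h_t$ from $I$ to $f$ and set $g_i=h_{i/s}\circ h_{(i-1)/s}^{-1}$''; it is a \emph{consequence} of the connectedness theorem, not an available ingredient for proving it. A partition of unity depending only on $\bm{x}$ gives you small supports, not small $C^1$-norm, so unless you supply an independent proof of the near-identity factorization, the plan begs the question.

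It is also worth noting that the route you explicitly reject is the one the paper takes, and the defect you point out is repairable. The proportional contraction $h_t(\bm{y},\bm{x})=\bigl(\bm{y},\,t f(\bm{y}/t,\bm{x}/t)\bigr)$ is indeed only $C^0$ at $t=0$, but the paper uses it only on $[\epsilon,1]$, where it is an honest compactly supported $C^k$-isotopy inside $\text{Diff}_c^{k,m,d}$; since $f$ is compactly supported, $h_\epsilon$ is $C^0$-close to $I$ (distance $O(\epsilon)$), so one can prolong to a compactly supported $C^0$-isotopy from $I$ to $f$ which is constant for $t\in[0,\epsilon/2]$, and then use density of compactly supported $C^k$-isotopies (with the first $m$ coordinates fixed) in the corresponding $C^0$-isotopies to smooth the middle portion rel the ends. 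So the endpoint degeneracy you flag is a technicality handled by smoothing, whereas the missing fragmentation-into-near-identity-factors step in your proposal is where the entire content of the theorem lives; as written, the proof does not go through.
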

\begin{proof}
For arbitrary $f \in \text{Diff}_c^{k,m,d}$, we have a compactly supported $C^k$-isotopy $$H: \mathbb{R}^{m+d} \times [\epsilon,1] \to \mathbb{R}^{m+d},$$ given by the proportionately contraction $$H\left(\left(\bm{y},\bm{x}\right),t\right)\triangleq t f\left( \bm{y}/t,{\bm{x}}/t\right), \bm{y}\in \mathbb{R}^m, \bm{x} \in \mathbb{R}^d.$$ It is obvious that $H\left(\left(\bm{y},\bm{x}\right),t\right)$ always lie in $\text{Diff}_c^{k,m,d}$, the first $m$-coordinate corresponding to $y$ are fixed.
By choosing $\epsilon$ small enough, we can achieve $h_{\epsilon}\left(\bm{y},\bm{x}\right) = H\left(\left(\bm{y},\bm{x}\right),\epsilon\right) \in \left({\text{Diff}_{c}^{0,m,d}}\right)_0$, the $C^0$-connected neighborhood of $I$.

Thus, there is a  prolongation of $H:\mathbb{R}^{m+d} \times [0,1] \to \mathbb{R}^{m+d}$ that is a compactly supported $C^0$-isotopy from $I$ to $f$ with $H_t = I, \text{ for all } t \in [0,\epsilon/2]$. Since compactly supported $C^k$-isotopy space with first $m$-coordinate fixed is dense in compactly supported $C^0$-isotopy space with first $m$-coordinate fixed, and since $H$ is already $C^k$ on $\mathbb{R}^{m+d} \times  \left([0,\epsilon/2]\cup [\epsilon,1]\right)$, we can always find a compactly supported $C^k$-isotopy $\tilde H$ from $I$ to $f$, and thus $f$ is isotopic to the identity $I$ in $\text{Diff}_c^{k,m,d}$.
\end{proof}

Follow the same proof in Lem.~\ref{neariddecomposition}, we can get the following lemma.

\begin{theorem}
 \label{neariddecompositionpara}
For any diffeomorphism $f \in \text{Diff}_c^{k,m,d}$ and any $\delta>0$ of the identity, there exists a finite sequence of $\left(\delta,k\right)$-near-identity diffeomorphisms $g_1,\cdots,g_s \in \text{Diff}_c^{k,m,d}$ such  that $f = g_s\circ g_{s-1}\circ \cdots \circ g_1$.
\end{theorem}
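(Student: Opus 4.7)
The plan is to mirror the proof of Lemma~\ref{neariddecomposition} verbatim, merely replacing the ambient group $\text{Diff}_c^k(\mathbb{R}^d)$ by $\text{Diff}_c^{k,m,d}$ throughout. The crucial point is that the preceding theorem guarantees not only that $\text{Diff}_c^{k,m,d}$ is connected, but that the connecting paths can be chosen to be compactly supported $C^k$-isotopies lying entirely inside $\text{Diff}_c^{k,m,d}$. This is exactly the analogue of Proposition~1.2.1 in \cite{Banyaga1997The} that was used in the non-parametric argument, so the rest of the proof goes through without modification.

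Concretely, I would proceed as follows. First, for the given $f\in\text{Diff}_c^{k,m,d}$, invoke the previous connectedness theorem to obtain a compactly supported $C^k$-isotopy $H\colon[0,1]\times\mathbb{R}^{m+d}\to\mathbb{R}^{m+d}$ with $h_t\triangleq H(t,\cdot)\in\text{Diff}_c^{k,m,d}$, $h_0=I$, and $h_1=f$. Second, for any integer $s\ge 1$ telescope
\[
f \;=\; h_1 \;=\; \bigl(h_{s/s}\circ h_{(s-1)/s}^{-1}\bigr)\circ\bigl(h_{(s-1)/s}\circ h_{(s-2)/s}^{-1}\bigr)\circ\cdots\circ\bigl(h_{1/s}\circ h_{0/s}^{-1}\bigr),
\]
and set $g_i \triangleq h_{i/s}\circ h_{(i-1)/s}^{-1}$. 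Because $\text{Diff}_c^{k,m,d}$ is a group (it is closed under composition and inversion, as both operations preserve the block form $F(\bm y,\bm x)=(\bm y,f(\bm y,\bm x))$ and compact support), each $g_i$ lies in $\text{Diff}_c^{k,m,d}$. Third, argue that by taking $s$ large enough every $g_i$ is $(\delta,k)$-near-identity.

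The only step that really needs justification is the last one, and this is exactly where the analogous argument for $\text{Diff}_c^k(\mathbb{R}^d)$ carries over. Since $H$ is $C^k$ and compactly supported in the spatial variable, $H$ together with its partial derivatives up to order $k$ is uniformly continuous on the compact slab $[0,1]\times K_H$ where $K_H\subseteq\mathbb{R}^{m+d}$ is a compact set containing the spatial support of $H_t$ for all $t$. Consequently, as $|t-t'|\to 0$, $h_t\circ h_{t'}^{-1}\to I$ in $C^k$-norm (composing with the uniformly $C^k$-bounded $h_{t'}^{-1}$ preserves smallness in $C^k$ via the Faà di Bruno chain-rule estimates, using that the first-order derivatives of $h_{t'}^{-1}$ are uniformly bounded by compactness of support and invertibility). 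Hence for $s$ sufficiently large every $g_i$ satisfies $\Vert g_i-I\Vert_{C^k}<\delta$, completing the proof.

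The main obstacle, such as it is, is confirming that the $C^k$-near-identity property really does transfer through the composition $h_{i/s}\circ h_{(i-1)/s}^{-1}$ rather than merely through an additive difference; this amounts to a routine but slightly tedious chain-rule bookkeeping argument that is entirely identical to the one used in the non-parametric lemma, since the extra $\bm y$-coordinates are simply carried along unchanged and contribute no new difficulty.
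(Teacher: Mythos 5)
Your proposal is correct and follows essentially the same route as the paper, which proves this theorem by simply repeating the telescoping argument of Lemma~\ref{neariddecomposition} (factoring $f=h_1$ along the compactly supported isotopy inside $\text{Diff}_c^{k,m,d}$ supplied by the preceding connectedness theorem, with $g_i=h_{i/s}\circ h_{(i-1)/s}^{-1}$ and $s$ large). Your added remarks on group closure of $\text{Diff}_c^{k,m,d}$ and the uniform-continuity/chain-rule justification of the near-identity property only make explicit what the paper leaves implicit.
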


We mark that Thm.~\ref{firstdecomposition} and Cor.~\ref{totaldecomposition} need no modification and can be directly applied to here.

\begin{proof} of Thm.~\ref{decompositionpara}.

It is immediately proved by Lem.~\ref{neariddecompositionpara}  and Cor.~\ref{totaldecomposition} by replac int $d$ in Cor.~\ref{totaldecomposition} with $ m+d$.
\end{proof}

\section{Dimension Eliminating layer for inverse model}\label{eliminating}
Consider a trained affine coupling flow with $z$ padding zeros:
\begin{equation*}
    \big(
    \bm{y},
    \bm{x},
    {\bm{0}}
    \big)
    \xmapsto{\text{forward}}
    \left( 
    \bm{y},
    \hat f\left(\bm{y},\bm{x}\right),
    \theta\left(\bm{y},\bm{x}\right)
    \right),
\end{equation*}
here $\hat f\left(\bm{y},\bm{x}\right)\in \mathbb{R}^d$ is a trained surrogate of the parametric diffeomorphism $f\left(\bm{y},\bm{x}\right) \in \mathbb{R}^d$. $\bm{y}\in \mathbb{R}^m, \bm{x}\in \mathbb{R}^d. \theta \in \mathbb{R}^z$ is something we don't need to consider in the forward map, but must consider when doing inverse map.

Given any point $\left(\bm{y}, \hat{\bm{x}}\right)$  in output domain, we are not able to directly find the cooresponding input point $\left(\bm{y}, \bm{x}\right)$ satisfying $\hat{\bm{x}} = \hat f\left(\bm{y},\bm{x}\right)$, because $\theta\left(\bm{y},\bm{x}\right)$ is unknown. 

To handle this problem, we can make $\theta\left(\bm{y},\bm{x}\right)$ to be known. The simplest way is to transform it to zeros. We can train such an eliminating maps:
\begin{equation*}
    \left( 
    \bm{y},
    \hat f\left(\bm{y},\bm{x}\right),
    \theta\left(\bm{y},\bm{x}\right)
    \right)
    \xmapsto{\text{eliminating}}
    \left( 
    \bm{y},
    \hat f\left(\bm{y},\bm{x}\right),
    o\left(\bm{y},\bm{x}\right)
    \right),
\end{equation*}
here $o\left(\bm{y},\bm{x}\right) \approx {\bm{0}}$ for all $\bm{y},\bm{x}$ in a certain compact set. Note that the eliminating map is always trainable in the scope of affine coupling structures: let $\hat{\bm{x}} = \hat f\left(\bm{y},\bm{x}\right)$, $\left(\bm{y},\bm{x}\right) \mapsto \left(\bm{y},\hat{\bm{x}}\right) $ is approx. invertible, thus $\theta\left(\bm{y},\bm{x}\right) \approx \hat\theta\left(\bm{y},\hat{\bm{x}}\right)$ for some $\tilde \theta$, a function of $y$ and $\hat {\bm x}$. By the structure of the affine coupling layers, 
\begin{equation*}
    \left( 
    \bm{y},
    \hat {\bm{x}},
    \hat\theta\left(\bm{y},\hat{\bm{x}}\right)
    \right)
    \longmapsto
    \left( 
    \bm{y},
    \hat {\bm{x}},
    {\bm{0}}
    \right)
\end{equation*}
can always be approximated.

After the eliminating, we finally get a map: 
\begin{equation*}
    \left(
    \bm{y},
    \bm{x},
    {\bm{0}}
    \right)
    \mapsto
    \left( 
    \bm{y},
    \hat {\bm{x}},
    o\left(\bm{y},\bm{x}\right)
    \right),
\end{equation*}
with $o\left(\bm{y},\bm{x}\right)\approx 0$ is neglectable. Thus given any $\left(\bm{y},
\hat {\bm{x}}\right)$, we can easily compute the original point  $\left(\bm{y},\bm{x}\right)$ because
such a map is characterized by INN, and thus easy to compute the inverse.

\begin{table}[ht]\small
    \centering
    \renewcommand{\arraystretch}{1.1}
    \setlength\tabcolsep{2pt}
    \caption{When dimensionality of context is 5: 
        \#hidden-layers and  \#hidden-nodes represent the  number of hidden layers and nodes for state network in \texttt{Para\_CFlow} and base network in other models, respectively. 
        \#flows is the number of modules in a flow-based model.
        \#param is the total number of trainable parameters in a neural surrogate model.
    }
    \label{Tab:Setting:Dim:5}
    \begin{tabular}{lcccc}
    \toprule
    Method      & \#hidden-layers & \#hidden-nodes & \#flows & \#parameters \\
    \midrule
    Para-CFlow  & 1               & 64             & 3       & 1428         \\
    MLP         & 2               & 32             & 0       & 1313         \\
    MLP-Ascend & 2               & 32             & 0       & 1481         \\
    Resnet      & 2               & 32             & 0       & 1346  \\
    \bottomrule
    \end{tabular}
\end{table}

\begin{table}[ht]\small
    \centering
    \renewcommand{\arraystretch}{1.1}
    \setlength\tabcolsep{2pt}
    \caption{When dimensionality of context is 10: 
        The meaning of the header is the same as in Tab.~\ref{Tab:Setting:Dim:5}.
    }
    \label{Tab:Setting:Dim:10}
    \begin{tabular}{lcccc}
        \toprule
        Method      & \#hidden-layers & \#hidden-nodes & \#flows & \#parameters \\
        \midrule
        Para-CFlow  & 0               & 128            & 3       & 5337         \\
        MLP         & 2               & 64             & 0       & 4993         \\
        MLP-Ascend & 2               & 64             & 0       & 5669         \\
        Resnet      & 2               & 64             & 0       & 5058 \\
        \bottomrule
    \end{tabular}
\end{table}

\begin{table}[ht]\small
    \centering
    \renewcommand{\arraystretch}{1.1}
    \setlength\tabcolsep{2pt}
    \caption{When dimensionality of context is 20: 
        The meaning of the header is the same as in Tab.~\ref{Tab:Setting:Dim:5}.
    }
    \label{Tab:Setting:Dim:20}
    \begin{tabular}{lcccc}
        \toprule
        Method      & \#hidden-layers & \#hidden-nodes & \#flows & \#parameters \\
        \midrule
        Para-CFlow  & 1               & 64             & 3       & 12723        \\
        MLP         & 3               & 64             & 0       & 9793         \\
        MLP-Ascend & 3               & 64             & 0       & 11469        \\
        Resnet      & 3               & 64             & 0       & 9922    \\
        \bottomrule
    \end{tabular}
\end{table}

\section{Experiments}\label{Sec:Appedix:Exp}
In our experiments, 
we use 3 well-known benchmark functions: \texttt{Ackley}, \texttt{Trid} and \texttt{Rastrigin} \cite{benchmark_functions-web}.
It is noted that the original benchmark tasks are \textbf{contextless}.
Here,
we simply set the first $d_c$ dimensions as the context vector and leave the last one dimension as the action to be optimized.
For our constructed optimization problems,
\texttt{Ackley} has the action dimension that is coupled with all context dimensions,
\texttt{Trid}'s action is coupled with just one context dimension,
and all the dimensions of \texttt{Rastrigin} are independent with each other.
To construct context-dominating tasks where the reward depends much more on the context than the action, we set $d_c\gg 1$.

\subsection{Experimental settings}\label{Sec:Appendix:Exp:Settings}
In the training of all the surrogate models,
we consistently set the number of batch size to 64,
number of the epochs to 200, and the learning rate to 0.01.
For cases when the dimensionality of context is 5, 10 and 20,
we implement each of the aforementioned surrogate models in Sec.~\ref{Sec:Exp:BO} with similar sizes (number of trainable parameters)
and the corresponding specifications are posted as in Tab.\ref{Tab:Setting:Dim:5}, Tab.~\ref{Tab:Setting:Dim:10}, and Tab.~\ref{Tab:Setting:Dim:20}, respectively.

\subsection{Cumulative regrets}\label{Sec:Appedix:Exp:CumulativeRegrets}
The cumulative regrets of BO using Thompson's sampling with different neural surrogate models for context of dimensionality 5, 10 and 20 are shown in Fig.~\ref{Fig:Appendix:Exp:CumulativeRegrets:Thompson}.

\begin{figure*}[ht]
    \centering
    \includegraphics[width=.9\linewidth]{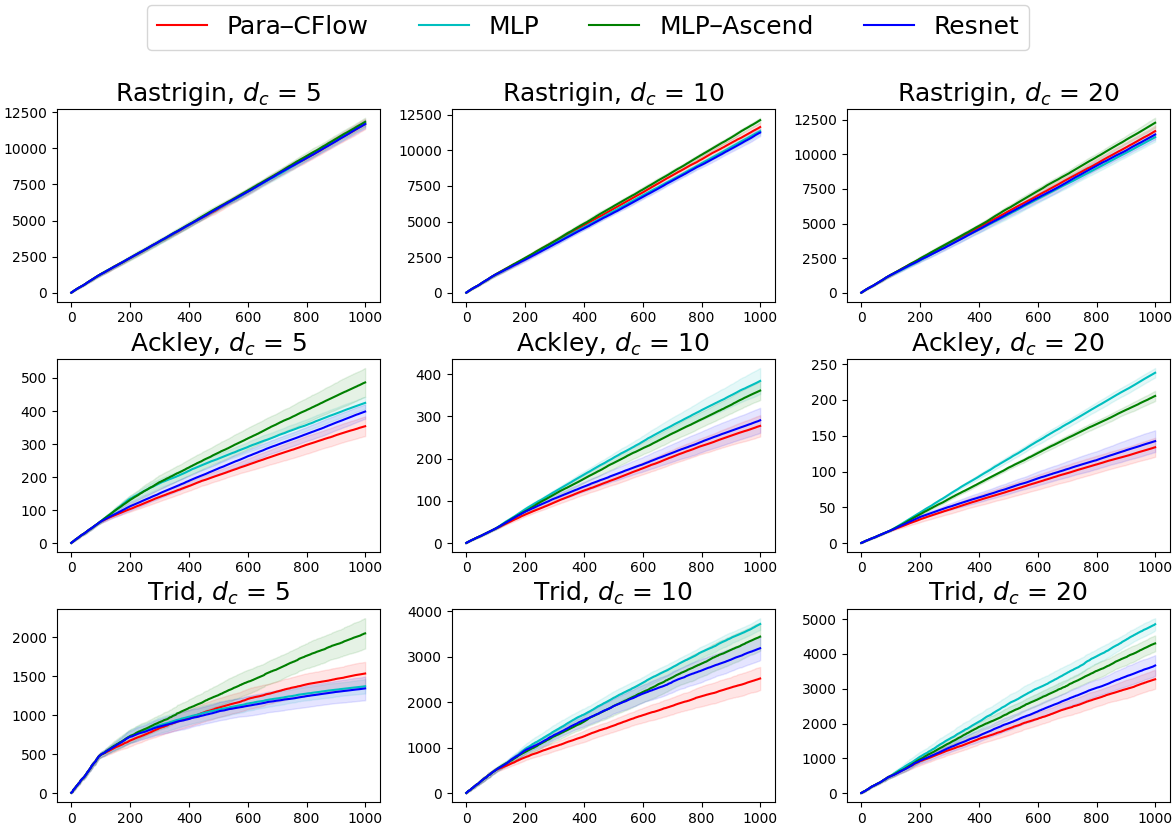}
    \vspace{-2ex}
    \caption{The mean and standard deviation of the cumulative regret under 10 independent trials using \textit{Thompson's sampling} with different neural surrogate models for context of dimensionality 5, 10, and 20 on \texttt{Rastrigin}, \texttt{Ackley} and \texttt{Trid}, respectively.}
    \label{Fig:Appendix:Exp:CumulativeRegrets:Thompson}
\end{figure*}

\end{document}